\newtheorem{definition}{Definition}
\newtheorem{theorem}{Theorem}
\begin{document}

\twocolumn[
\icmltitle{Towards Speeding up Adversarial Training in Latent Spaces}



\icmlsetsymbol{equal}{*}

\begin{icmlauthorlist}
\icmlauthor{Yaguan Qian}{equal,zust}
\icmlauthor{Qiqi Shao}{equal,zust}
\icmlauthor{Tengteng Yao}{zust}
\icmlauthor{Bin Wang}{hik}
\icmlauthor{Shouling Ji}{zju}
\icmlauthor{Shaoning Zeng}{hzu}
\icmlauthor{Zhaoquan Gu}{gzu}
\icmlauthor{Wassim Swaileh}{cy}
\end{icmlauthorlist}

\icmlaffiliation{zust}{School of Big-Data Science, Zhejiang
University of Science and Technology, Hangzhou, China}
\icmlaffiliation{hik}{Network and Information Security Laboratory of
Hangzhou Hikvision Digital Technology Co., Ltd., China}
\icmlaffiliation{zju}{College of Computer Science and Technology, Zhejiang University, Hangzhou, China}
\icmlaffiliation{hzu}{School of Computer Science and Technology, Huizhou University, Huizhou, China}
\icmlaffiliation{gzu}{Cyberspace Institute of Advanced Technology, Guangzhou
University, Guangzhou, China}
\icmlaffiliation{cy}{ETIS Research Laboratory, CY Cergy Paris University, Paris, France}

\icmlcorrespondingauthor{Bin Wang}{wbin2006@gmail.com}
\icmlcorrespondingauthor{Yaguan Qian}{qianyaguan@zust.edu.cn}


\vskip 0.3in
]






\begin{abstract}
Adversarial training is wildly considered as one of the most effective way to defend against adversarial examples. However, existing adversarial training methods consume unbearable time, due to the fact that they need to generate adversarial examples in the large input space. To speed up adversarial training, we propose a novel adversarial training method that does not need to generate real adversarial examples. By adding perturbations to logits to generate Endogenous Adversarial Examples (EAEs)---the adversarial examples in the latent space, the time consuming gradient calculation can be avoided.  Extensive experiments are conducted on CIFAR-10 and ImageNet, and the results show that comparing to state-of-the-art methods, our EAE adversarial training not only shortens the training time, but also enhances the robustness of the model and has less impact on the accuracy of clean examples than the existing methods. 
\end{abstract}

\section{Introduction}
\label{submission}

Deep neural networks (DNNs) has been successfully applied in image recognition \cite{[a21]}, speech recognition \cite{[a22]}, natural language processing \cite{[a23]}, and other fields. However, recent studies revealed that DNNs are seriously vulnerable to adversarial examples. Many methods were proposed to generate adversarial examples \cite{[a1]}, \cite{[a2]}, \cite{[a3]}, \cite{[a4]}, \cite{[a5]}, \cite{[a6]}. At the same time, the corresponding defenses have also been extensively studied \cite{[a2]}, \cite{[a5]}, \cite{[a7]}, \cite{[a11]}, \cite{[a12]}, in which adversarial training is considered to be one of the most effective defensive methods at present \cite{[a12]}.

\begin{figure}
\centering  
\includegraphics[width=1in, height=1in]{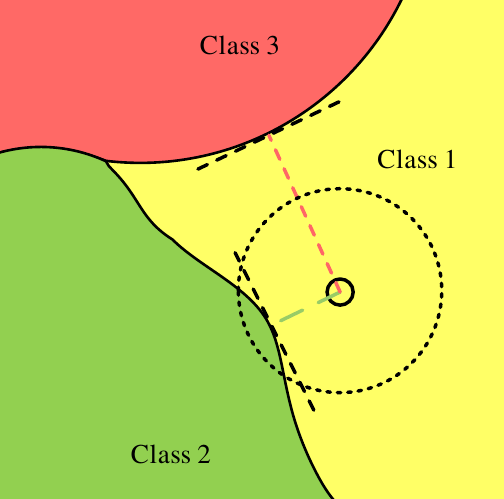}  
\includegraphics[width=1in, height=1in]{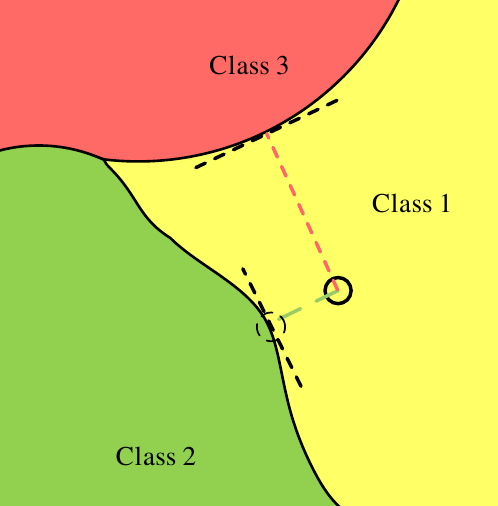}
\includegraphics[width=1in, height=1in]{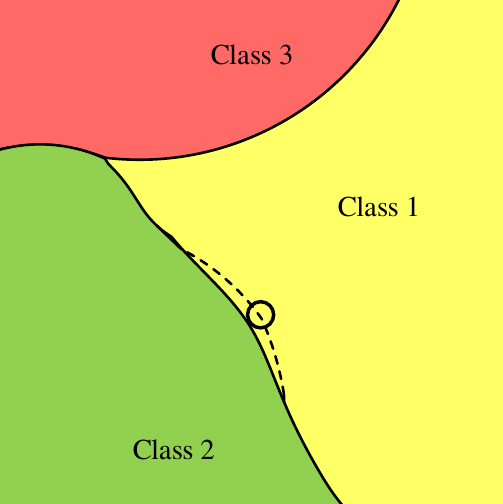} 
\caption{Illustration of the decision boundary between Class1, Class2, and Class3. The circle with solid line represents the example $\boldsymbol{x}$, and the circle with dotted line represents the perturbation bound. Left: distance to the decision boundary. Middle: an adversarial example in the input space corresponding to an EAE. Right: new decision boundary after EAE adversarial training.}  
 \label{fig14}   
 \end{figure}

Existing research shows that the computational cost of adversarial training is very high \cite{[a12]}. In particular, the generation of adversarial examples accounts for the main part of the total time-consumption. For improving the efficiency of adversarial training, Shafahi et. al \cite{[a12]} proposed a ``Free” method to reduce the number of gradient steps used for generating adversarial examples and Wong et. al \cite{[a13]} proposed a ``Fast” method recently that outperforms the ``Free” method. However, these methods still rely on creating real adversarial examples in the input space. 

In this paper, we propose a novel adversarial training method without generating real adversarial examples in the input space. The main idea is to add perturbations to the output (logits) of the penultimate layer. This output space is termed as a \emph{latent space} in this paper. In contrast to an adversarial example in the input space, we denote an example added perturbations in the latent space as an \emph{endogenous adversarial example} (EAE).  Suppose $\mathcal{F}(\boldsymbol{x})$ is a pretrained model (or a classifier), and its logits $\boldsymbol{z}=(z_1, z_2, \dots, z_{C})$ of $\boldsymbol{x}$, where $C$ is the number of classes. Assuming $z_1$ is the largest component and $z_2$ is the second largest component, then the class (\emph{Class1}) corresponding to $z_1$ is considered as the predicted class with the maximum posterior rule. Accordingly, we denote the class responding to $z_2$ as \emph{Class2}. Due to the similarity of the confidence of Class1 and Class2, we speculate the example $\boldsymbol{x}$ is closer to the decision boundary of Class2 than that of the remained classes in the input space as illustrated in Fig. \ref{fig14}. Thus we think Class2 is suitable for an adversarial targeted class in the latent space to ensure the smallest perturbation. 

Unfortunately, even if we get the smallest perturbation in the latent space, we still do not guarantee its corresponding perturbation in the input space within the range of constraints, which is critical to avoid detection by human. We define the concepts of \emph{seed examples} that can be successfully crafted as an adversarial example by a specific attack method. We notice the logit difference between Class1 and Class2 of a seed example and a non-example has significant difference in statistical distributions. Thus we can determine the seed examples via their logit difference, which guarantees their perturbations in the latent space satisfying the constraint in the input space.

The advantage of this approach is that it can avoid calculating the gradient of a loss with respect to the example and remarkably speed up the adversarial training process. Compared to previous work, the contributions of our work are summarized as follows:
\begin{enumerate}
  \item [(1)] To the best of our knowledge, we are the first to propose EAE adversarial training without generating real adversarial examples. Since no need to calculate the gradient of the loss with respect to the example, it remarkably improves the adversarial training speed. 
  \item [(2)] We clearly define the concepts of seed examples and EAEs for the first time. With these concepts, we observe the difference of seed example and non-seed example in statistics, which helps craft an EAE satisfying the perturbation constraints. 

  \item [(3)] Extensive experiments conducted on CIFAR-10 and ImageNet show that the training time required for EAE adversarial training is about 1/3 of ``Fast" adversarial training. Meanwhile, for clean examples, the model after EAE adversarial training does not significantly reduce its classification accuracy.
   
\end{enumerate}

\section{Related Work}

Szegedy et al. \cite{[a1]} first discovered the existence of adversarial examples in DNNs. After then, Goodfellow et al. \cite{[a2]} proposed FGSM for quickly generating adversarial examples through one-step update along the gradient direction. Subsequently, Kurakin et al. \cite{[a14]} proposed BIM, which is a multi-step attack with smaller steps to achieve a stronger adversarial example. Madry et al. \cite{[a5]} proposed PGD, strengthening iterative adversarial attacks by adding multiple random restart steps.  All these methods craft adversarial examples in the input space, while our method manipulate logits in the latent space.

Goodfellow et al. \cite{[a2]} proposed FGSM adversarial training method using FGSM adversarial examples. Madry proposed PGD adversarial training, which is considered as the best method so far. However, existing research has shown \cite{[a12]} that the computational complexity of adversarial training is very high, and the generation of adversarial examples accounts for the main part of the total time. To reduce the computational overhead of PGD, Shafahi et al. \cite{[a12]} proposed ``Free" adversarial training, which updated both model weights and input perturbation by using a single back-propagation. Zhang et al. \cite{[a18]} believed that when performing a multi-step PGD, redundant calculations can be cut down. Therefore, Wong et al. \cite{[a13]} proposed the ``Fast" method, which used the previous minbatch’s perturbation or initialized a perturbation from uniformly random perturbation to add to the clean example. However, these methods all rely on generating real adversarial examples, and calculate the gradient of loss with respect to the example through a back-propagation. 


\section{Problem Statement}
\subsection{Basic Definition}
In this paper, we denote training data as ${D}=\left\{\left(\boldsymbol{x}_{i}, {y}_{i}\right)\right\}_{i=1}^{N}$, where $\boldsymbol{x}_{i} \in\mathbb{R}^{D}$, $y_{i} \in\{1,\dots,C\}$, and $y_{i}$ is the class index of $\boldsymbol{x}_{i}$. Suppose $\mathcal{F}(\boldsymbol{x};\boldsymbol{\theta})$ is a pretrained neural network model with a parameter vector $\boldsymbol{\theta}$, then $\mathcal{F}(\boldsymbol{x};\boldsymbol{\theta})=\mathcal{F}^{(N)}\left(\ldots\left(\mathcal{F}^{(2)}\left(\mathcal{F}^{(1)}(\boldsymbol{x})\right)\right)\right)$. The output of the penultimate layer of the classifier $\mathcal{F}$ is a vector of logits $\mathcal{F}^{(N-1)}=\boldsymbol{z}=(z_1, z_2, \dots, z_{C})$, and the output of the last layer is a probability vector $P=(p_1, p_2, \dots, p_{C})$ obtained by a Softmax function,
where $p_{j}$ is the confidence score to determine the example classified as the $j$-th class. $\hat{y}=\arg \max _{j}\left\{p_{j}\right\}$ is the predicted class index of $\boldsymbol{x}$.

Szegedy et al. \cite{[a1]} first discovered adversarial examples in DNNs. We formally defines adversarial examples as follows:

\begin{definition}(Perturbed Examples and Adversarial Examples)
For a clean example $\boldsymbol{x} \in\mathbb{R}^{D}$, $y$ is the ground-truth class index of $\boldsymbol{x}$. If $\boldsymbol{x} $ is added by a perturbation $\boldsymbol{\delta}$ with $\left \| \boldsymbol{\delta} \right\|_2<\epsilon$ , i.e., $\boldsymbol{x}^{\prime}=\boldsymbol{x}+\boldsymbol{\delta}$, then $\boldsymbol{x}^{\prime}$ is referred to as a perturbed example. For a perturbed example $\boldsymbol{x}^{\prime}$, if it satisfies $\mathcal{F}(\boldsymbol{x}^{\prime})\neq y$, then $\boldsymbol{x}^{\prime}$ is called an adversarial example.
\end{definition}

Note here the constraint $\left \| \boldsymbol{\delta} \right\|_2<\epsilon$ is critical to avoid detection by human. Since the added perturbation $\boldsymbol{\delta}$ is constrained by $\epsilon$, not all clean examples can be successfully crafted to the corresponding adversarial examples. Accordingly, we further divide the clean examples into \emph{candidate seed examples} and \emph{seed examples} in this paper.

\begin{definition}(Candidate Seed Examples and Seed Examples)
Assuming that $G$ is an algorithm to generate adversarial examples, i.e., $\boldsymbol{x}^{\prime}=G(\boldsymbol{x})$, any clean example $\boldsymbol{x} \in\mathbb{R}^{D}$ that can be accurately classified by $\mathcal{F}$ is called a candidate seed example of $G$. Supposing that $y$ is the ground-truth class of $\boldsymbol{x}$, given a constraint bound $\epsilon$, if $\mathcal{F}(G(\boldsymbol{x}))\neq y$, then $\boldsymbol{x}$ is called a seed example of $G$ on the model $\mathcal{F}$.
\end{definition}


Traditional adversarial training needs to generate adversarial examples in the input space to train the classifier. Instead, our method is to perturb the logits of the penultimate layer of a network, which is called as endogenous adversarial examples. The formal definition is presented as follows:

\begin{definition}(Endogenous Adversarial Examples)
For a clean example $\boldsymbol{x} \in\mathbb{R}^{D}$, $y$ is the ground-truth class of $\boldsymbol{x}$. $\boldsymbol{z}$ is the logits of $\boldsymbol{x}$, and a perturbation $\Delta$ in the latent space is added to the logits $\boldsymbol{z}$, that is, $\boldsymbol{z}^{\prime}=\boldsymbol{z}+\Delta$. If there is a corresponding $\boldsymbol{x}^{\prime}$ in the input space, which satisfies $\mathcal{F}(\boldsymbol{x}^{\prime})=\operatorname{Softmax}(\boldsymbol{z}^{\prime})\neq y$, then $\boldsymbol{z}^{\prime}$ is referenced to as an endogenous adversarial example (EAE).
\end{definition}

\subsection{Problem Setup}
Here we obtain adversarial examples in the latent space of DNNs. Thus we model the adversarial training as a two-level optimization problem as follows:
\begin{equation}
\begin{array}{c}
\min \limits_{\boldsymbol{\theta}} \mathcal{L}(\boldsymbol{z}+\Delta, \boldsymbol{\theta}) \\
\text {s.t. } \underset{\Delta}{\min} \|\Delta\|_{2} \\
z_{y}+\Delta_{y}<\sup\limits_{\Delta_{j}} \left\{z_{j}+\Delta_{j} \mid 1 \leq j \leq C \wedge j \neq y\right\}
\end{array}
\label{equ1}
\end{equation}
where $\mathcal{L}(\cdot)$ is a loss function, and $z_y$ is the component of logits corresponding to the ground-truth class $y$. The up-level problem is to minimize the loss function $\mathcal{L}$, while the low-level problem is to find the minimum perturbation $\Delta$, as described in problem (\ref{equ1}).

Instead of generating adversarial examples in the input space, we attempt to obtain a minimum perturbation $\Delta$ in the latent space instead of perturbation $\boldsymbol{\delta}$ in the input space. Assume that the ground-truth class $y$ is consistent with the predicted class $\hat{y}$ of the clean example $\boldsymbol{x}$, i.e., $y=\hat{y}$. After calculating by the softmax layer, the final output of the EAE $x^{\prime}$ is:
\begin{equation}
\begin{aligned}
p_{j}^{\prime}&=\operatorname{Softmax}(\boldsymbol{z}+\Delta)_{j}\\
&=e^{z_{j}+\Delta_{j}}/\sum\limits_{k=1}^{C}e^{z_{k}+\Delta_{k}}, \qquad j=1,2,\dots,C
\end{aligned}
\end{equation}
where $y^{\prime}=\arg \max _{j}\left\{p_{j}^{\prime}\right\}$ is the predicted class of the EAE. From the definition of adversarial examples, we know that $y^{\prime}\neq y$ and $y^{\prime}\neq\hat{y}$. That is, $y\neq \arg \max_{j}\left\{p_{j}^{\prime}\right\}$. Therefore, $\mathcal{F}(\boldsymbol{x}^{\prime};\boldsymbol{\theta})\neq y$ means $\operatorname{Softmax}(\boldsymbol{z}+\Delta)_{y}<\sup  \left\{\operatorname{Softmax}(\boldsymbol{z}+\Delta)_{j} \mid 1 \leq j \leq C \wedge j \neq y\right\}$. 

Since Softmax is a monotonically increasing function, the above inequality is equivalent to $z_y+\Delta_y<\sup\limits_{\Delta_{j}}  \left\{z_j+\Delta_j \mid 1 \leq j \leq C \wedge j \neq y\right\}$. Therefore, we express the process of generating EAEs as the following optimization problem:
\begin{equation}
\begin{array}{c}
\min \limits_{\Delta}\|\Delta\|_{2} \\\text {s.t.}\quad z_{y}+\Delta_{y}<\sup\limits_{\Delta_{j}}  \left\{z_{j}+\Delta_{j} \mid 1 \leq j \leq C \wedge j \neq y\right\}
\end{array}
\label{equ2}
\end{equation}

\section{Implementation}
Although we can find a minimum perturbation $\Delta$ in the latent space through an optimization algorithm, it cannot guarantee the corresponding perturbation $\boldsymbol{\delta}$ in the input space to meet the constraint $\left \| \boldsymbol{\delta} \right\|_2<\epsilon$, i.e., the EAE does not satisfy the definition of adversarial examples (Definition 1). We address this problem by selecting seed examples. According to the definition of seed examples, they are the clean examples guaranteed to be crafted as adversarial examples. A threshold-based method using statistical distribution is adopted to select seed examples. Then with these selected seed examples, we use an optimization method to obtain EAEs. At the same time, the adversarial training are conducted with these EAEs.

\subsection{Selecting Seed Examples}
 In this paper, we propose a method based on statistical distribution to observe the seed examples. In fact, under the constraint $\left \| \boldsymbol{\delta} \right\|_2<\epsilon$, only these examples nearby the decision boundary can finally be crafted as adversarial examples \cite{[p]}, which are referred to as \textit{seed examples} in this paper. If we select all possible seed examples, then we can craft \textit{endogenous adversarial examples} that $\mathcal{F}(\boldsymbol{x}+\boldsymbol{\delta})=\operatorname{Softmax}(\boldsymbol{z}+\Delta)$, s.t. $\left \| \boldsymbol{\delta} \right\|_2<\epsilon$.

\begin{figure} 
\centering 
\subfigure[CIFAR-10]{
\includegraphics[width=3.8cm,height=3cm]{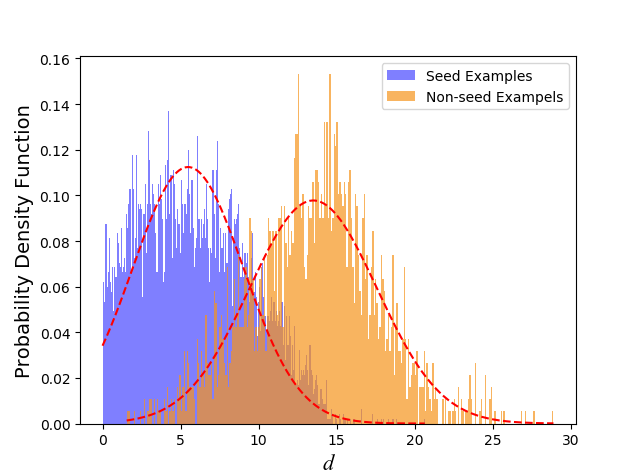} 
}
\subfigure[ImageNet]{
\includegraphics[width=3.8cm,height=3cm]{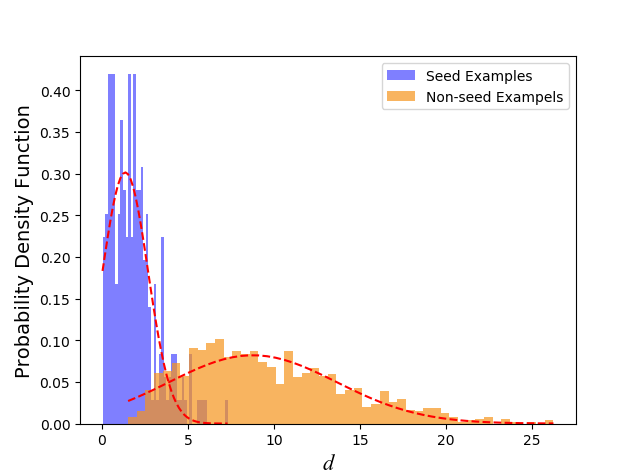} 
} 
 \caption{LD distribution of seed examples and non-seed examples: (a) on CIFAR-10, when FGSM is carried out on $F_1$ and $\epsilon=0.05$, LD distribution of seed examples and non-seed examples; (b) on ImageNet, when FGSM is carried out on $F_3$ and $\epsilon=0.002$, LD distribution of seed examples and non-seed examples.}
 \label{fig.1}
 \end{figure}

Let $d=z_{y}-z_{s}$ denote the difference between the largest component and the second largest component in the logits. For convenient description, we refer to this difference as LD (Logit Difference). Our experiment shows that for seed examples, LD is smaller, while for non-seed examples, LD is larger. Fig. \ref{fig.1} shows that the LDs of both seed examples and non-seed examples follows the Gaussian distribution. Thus, we introduce a threshold $\gamma$ to select the seed examples. If a clean example satisfies $d<\gamma$, we believe it may be a seed example with a large confidence. With this simple rule, we can find most of the seed examples.

We determine the value of $\gamma$ by empirical rules, which we will discuss in the experiment. First, some clean examples that correctly classified by the model are remained as a set of candidate seed examples  ${ \widetilde D}=\left\{\left(\boldsymbol{x}_{i}, {y}_{i}\right)\right\}_{i=1}^{M}$, where $\widetilde D \subset D$ and $|\widetilde D| \ll |D|$. If the clean example $\boldsymbol{x} \in \widetilde D$ can be perturbed successfully by an adversarial generation algorithm $G$, such as FGSM, then $\boldsymbol{x}$ is remained to the set of seed examples $D^+$, otherwise it  is remained to the set of non-seed examples $D^-$. Then, we can obtain the difference between the largest component and the second largest component in the logits of examples in $D^+$ and $D^-$, respectively (as shown in Fig. \ref{fig.1}). We use the mean of LD denoted by $\bar d$, which is called as MLD in this paper, in $D^+$ as the threshold $\gamma$. Since our EAE adversarial training does not strictly require that all seed examples must generate EAEs, the threshold determined by this statistical method is more reliable.

\subsection{Generating EAEs}
Taking a deep insight into problem (\ref{equ2}), we observe that obtaining the minimum perturbation with the constraint only needs to find the second largest value in $z_{j}(1\leqslant j \leqslant y)$  corresponding to the second likelihood class. Let $z_s$ denote the second largest value: $z_{s}=\max \left\{z_j \mid 1 \leq j \leq C \wedge j \neq y\right\}$, such that $z_{y}+\Delta_{y}<z_{s}+\Delta_{s}$, where $\Delta_{y}$ is the perturbation added on $z_y$ and $\Delta_{s}$ is the perturbation added on $z_s$. Obviously, when $\Delta_{j}=0$ $(1\leqslant j \leqslant C \wedge j\neq y \wedge j \neq s)$, $\|\Delta\|_{2}$ reaches the minimum and $\|\Delta\|_{2}=\sqrt{\Delta_{y}^{2}+\Delta_{s}^{2}}$. Since $\min \Delta_{y}^{2}+\Delta_{s}^{2}$ is equivalent to $\min\sqrt{\Delta_{y}^{2}+\Delta_{s}^{2}}$, the optimization problem (\ref{equ2}) can be expressed as:
\begin{equation}
\begin{array}{c}
\underset{\Delta_{y},\Delta_{s}}{\min}\quad \Delta_{y}^{2}+\Delta_{s}^{2}
\\\text {s.t.}\quad z_{y}+\Delta_{y} \leq z_{s}+\Delta_{s} 
\end{array}
\label{equ3}
\end{equation}

Since the constraint $z_{y}+\Delta_{y}<z_{s}+\Delta_{s}$ cannot determine the boundary of feasible regions, we further relax the constraints as $z_{y}+\Delta_{y} \leq z_{s}+\Delta_{s}$ in problem  (\ref{equ3}). Although this relax cannot guarantee that the example added with the perturbation $\Delta$ can become an EAE, we can ensure $z_{y}+\Delta_{y}=z_{s}+\Delta_{s}$, i.e., the confidence score of Class1 and Class2 is equal. Accordingly, the perturbed example has a 50\% probability to be classified as one of the top-2 classes. Our adversarial training does not strictly require each clean example becomes an EAE. 

We use the Lagrangian multiplier method to solve this problem. The Lagrangian function can be written as follows:
\begin{equation}
    L(\Delta_{y},\Delta_{s},\lambda)=\Delta_{y}^{2}+\Delta_{s}^{2}+\lambda(\Delta_{y}-\Delta_{s}+z_{y}-z_{s})
\end{equation}
Its optimality conditions are:
\begin{equation}
\left\{\begin{array}{l}
\frac{\partial L}{\partial \Delta_{y}}=2 \Delta_{y}+\lambda=0 \\
\frac{\partial L}{\partial \Delta_{s}}=2 \Delta_{s}-\lambda=0 \\
\lambda\left(\Delta_{y}-\Delta_{s}+z_{y}-z_{s}\right)=0 \\
\lambda \geq 0
\end{array}\right.
\end{equation}
So when $\lambda=0$, then $\Delta_{y}=0$, $\Delta_{s}=0$; when $\Delta_{y}-\Delta_{s}+z_{y}-z_{s}=0$, then $\Delta_{y}=-(z_{y}-z_{s})/2$, $\Delta_{s}=(z_{y}-z_{s})/2$, $\lambda=z_{y}-z_{s}$. Obviously, the former does not meet the solution of problem (\ref{equ3}). In summary, the approximate optimal solution of problem (\ref{equ2}) is $\Delta^*=(\Delta_{1},\dots,\Delta_{j},\dots,\Delta_{C})$,
\begin{equation}
\Delta^{*}=\left\{\begin{array}{cl}
-\left(z_{y}-z_{s}\right) / 2, & j=y \\
\left(z_{y}-z_{s}\right) / 2, & j=s \\
0, & j \neq y \wedge j \neq s
\end{array}\right.
\end{equation}

\begin{algorithm}
	\caption{Adversarial training with EAEs}
     
    {\textbf{Input:} $D=\left\{\left(\boldsymbol{x}_{i}, y_{i}\right)\right\}_{i=1}^{N}$; $T$ is the number of epochs; $K$ is the number of minibatch; $M$ is  the size of minibatch; $\gamma$ is the threshold}\\
    {\textbf{Output:} $\boldsymbol{\theta}$ }

\begin{algorithmic}

	\STATE \textbf{for}  $t=1,...,T$ \textbf{do}\\
	\STATE  \quad \textbf{for}  $i=1,...,K$ \textbf{do}\\
	\STATE  \quad\quad \textbf{for}  $j=1,...,M$ \textbf{do}\\ 
	\STATE  \quad\quad \quad $\boldsymbol{z}_{j} \leftarrow F^{(N-1)}\left(\ldots\left(F^{(2)}\left(F^{(1)}\left(\boldsymbol{x}_{j}\right)\right)\right)\right)$
	\STATE  \quad\quad \quad $z_{y} \leftarrow \max(\boldsymbol{z}_{j})$, $z_{s} \leftarrow \sec(\boldsymbol{z}_{j})$
	\STATE \quad\quad \quad \textbf{if}  $z_{y}-z_{s}<\gamma$ \textbf{then}\quad$\boldsymbol{z}_{j} \leftarrow \boldsymbol{z}_{j}+\Delta_{j}$
	\STATE  \quad\quad \quad \textbf{end if}
	\STATE  \quad\quad \textbf{end for}
	\STATE  \quad\quad $\mathcal{L} \leftarrow \frac{1}{M} \sum \limits_{i=1}^{M} \mathcal{L}\left(\boldsymbol{z}_{j}, y_{j}\right)$, $\boldsymbol{\theta} \leftarrow \boldsymbol{\theta}-\eta \cdot \nabla_{\boldsymbol{\theta}} \mathcal{L}$
	\STATE    \quad \textbf{end for}\\
	\STATE  \textbf{end for}\\
		
\end{algorithmic}
\end{algorithm}

\subsection{Adversarial Training with EAEs}
We utilize SGD to train the DNN and divide the training dataset $D$ into $K$ minibatchs. Suppose the initial parameters of the DNN are $\boldsymbol{\theta}_{0}$, the algorithm will generate the EAEs in the first minibatch. With these EAEs, the gradient of the loss function with respect to $\boldsymbol{\theta}$ is obtained and then the model parameters are updated by back-propagation. This iterative update will continue until $K$ minibatchs are processed.

\section{Existence of EAEs}
In this section, we make an attempt to interpret the existence of EAEs through the theory of manifold. We ﬁrst introduce the basic deﬁnitions of the manifold for better understanding of our further analysis.

\begin{definition}(Manifolds)
A $d$-dimensional manifold $M$ is a set that is locally homeomorphic with $\mathbb{R}^d$. That is, for each $\boldsymbol{x}\in M$, there is an open neighborhood $N_{\boldsymbol{x}}$ around $\boldsymbol{x}$, and a homeomorphism $f:N_{\boldsymbol{x}}\to \mathbb{R}^d$. These neighborhoods are referred to as coordinate patches, and the map is referred to as coordinate charts. The image of the coordinate charts is referred to as parameter spaces \cite{[a19]}.
\end{definition}

According to manifold hypothesis, real-world data presented in high-dimensional spaces are expected to concentrate in the vicinity of a manifold $M$ of much lower dimensionality $\mathbb{R}^d$ embedded in $\mathbb{R}^D (D>d)$  \cite{[a16]}. Then, we assume the latent space is a $d$-dimensional manifold $M$ passed through the $D$-dimensional input space.

\begin{theorem}
Assume that $\boldsymbol{x}$ is a seed example and $\boldsymbol{x}'$ is an adversarial example of $\boldsymbol{x}$ in the input space, which means that $\mathcal{F}(\boldsymbol{x'})\neq y$ and $\left \| \boldsymbol{x}-\boldsymbol{x'} \right\|_2<\epsilon$,  then there exist an endogenous adversarial example $\boldsymbol{z'}$ in the latent space and a positive constant $c$ satisfying $\|\boldsymbol{z}'-\boldsymbol{z}\|_2 \leq c\| \boldsymbol{x}' - \boldsymbol{x}\|_2$.
\end{theorem}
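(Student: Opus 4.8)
The plan is to realize the latent-space point directly as the image of the input-space adversarial example under the network's feature map, and then to reduce the claimed inequality to a Lipschitz estimate for that map. Write $g = \mathcal{F}^{(N-1)} \circ \cdots \circ \mathcal{F}^{(1)}$ for the composition of all layers up to and including the penultimate one, so that $\boldsymbol{z} = g(\boldsymbol{x})$; defining $\boldsymbol{z}' := g(\boldsymbol{x}')$ then produces, without any further work, a candidate latent perturbation $\Delta = \boldsymbol{z}' - \boldsymbol{z}$ with $\boldsymbol{z}' = \boldsymbol{z} + \Delta$. The theorem asks for both existence of such a $\boldsymbol{z}'$ and the norm bound, and this construction is designed to deliver both at once.

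First I would verify that $\boldsymbol{z}'$ is genuinely an EAE in the sense of Definition 3. By construction $\boldsymbol{x}'$ is the input-space point corresponding to $\boldsymbol{z}'$, and since the softmax layer acts directly on the logits we have $\operatorname{Softmax}(\boldsymbol{z}') = \mathcal{F}(\boldsymbol{x}')$. The hypothesis that $\boldsymbol{x}'$ is an adversarial example of the seed example $\boldsymbol{x}$ gives $\mathcal{F}(\boldsymbol{x}') \neq y$, hence $\operatorname{Softmax}(\boldsymbol{z}') \neq y$, which is exactly the condition required of an EAE. This step is essentially bookkeeping and carries no analytic content.

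Next I would establish the norm inequality. The key observation is that each layer $\mathcal{F}^{(i)}$ is Lipschitz continuous: an affine (fully connected or convolutional) layer has Lipschitz constant equal to the operator norm $L_i = \|W^{(i)}\|_2$ of its weight matrix, while the standard activations (ReLU, sigmoid, tanh) are $1$-Lipschitz. Since a composition of Lipschitz maps is Lipschitz with constant at most the product of the factors, $g$ is Lipschitz with $c := \prod_{i=1}^{N-1} L_i$, and therefore
\begin{equation}
\|\boldsymbol{z}' - \boldsymbol{z}\|_2 = \|g(\boldsymbol{x}') - g(\boldsymbol{x})\|_2 \leq c \, \|\boldsymbol{x}' - \boldsymbol{x}\|_2,
\end{equation}
which is the desired bound with a positive constant $c$.

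The hard part will be making $c$ genuinely finite and uniform while honoring the manifold picture that frames this section. Because activations such as ReLU are only piecewise smooth, $g$ is not globally differentiable, so I would avoid a single global Jacobian bound; instead I would invoke the manifold hypothesis to restrict attention to a coordinate patch $N_{\boldsymbol{x}}$ around $\boldsymbol{x}$ whose chart image is a relatively compact parameter region containing the $\epsilon$-ball that holds $\boldsymbol{x}'$. On such a compact set the piecewise-smooth map $g$ remains Lipschitz, and continuity of its piecewise derivative furnishes a finite bound that I would take as $c$, with the product-of-norms expression above serving as an explicit upper estimate. The delicate point to argue carefully is precisely this localization: one must ensure that both $\boldsymbol{x}$ and its adversarial counterpart $\boldsymbol{x}'$ lie in a common chart, so that a mean-value-type inequality can be applied along a path joining them inside the parameter space.
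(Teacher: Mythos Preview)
Your proof is correct but follows a genuinely different route from the paper. You take $\boldsymbol{z}' = g(\boldsymbol{x}')$ with $g = \mathcal{F}^{(N-1)}\circ\cdots\circ\mathcal{F}^{(1)}$ and invoke the global Lipschitz constant $c=\prod_i\|W^{(i)}\|_2$ of the network; this is elementary, yields an explicit constant, and the EAE verification via Definition~3 is immediate. The paper instead works entirely through the manifold picture: it introduces a nearest-point projection $P:\mathbb{R}^D\to M$ onto the data manifold, projects $\boldsymbol{x}$ and $\boldsymbol{x}'$ to $\boldsymbol{x}_*,\boldsymbol{x}_*'\in M$, then pulls these back through a local parametrization $H:U\subset\mathbb{R}^d\to M$ to obtain $\boldsymbol{z},\boldsymbol{z}'$; the constant $c=c_0/(c_1\gamma_1)$ arises from compactness of $B(\boldsymbol{x},\epsilon)$ (for $P$) and injectivity of $\mathrm{D}H$ on a compact chart neighborhood (for $H^{-1}$). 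Your argument is shorter and more concrete; the paper's argument is what actually ties the result to the manifold hypothesis announced in the section, at the cost of non-explicit constants and additional regularity assumptions on $P$ and $H$.

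One remark on your final paragraph: the localization you flag as ``the hard part'' is not actually needed for your own argument. ReLU, sigmoid, and tanh are globally $1$-Lipschitz (not merely piecewise smooth), so the product-of-operator-norms bound is already a global, finite Lipschitz constant for $g$; no restriction to a chart or compact set is required. That paragraph seems to be anticipating the paper's manifold framing rather than filling a gap in your own reasoning, and you could drop it without loss.
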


\begin{proof}
Let $M$ be a $d$ dimensional manifold, i.e., the latent space. This manifold $M$ is an embedded submanifold of $\mathbb{R}^{D}$. Let $P:\mathbb{R}^{D} \to M$ be the projection operator onto $M$, i.e.,
\begin{equation}
P(\boldsymbol{v}) = \arg\min_{\boldsymbol{w} \in M}\| \boldsymbol{w} -\boldsymbol{v}\|_{2}
\end{equation}

Given a point $\boldsymbol{x}\in \mathbb{R}^{D}$, suppose that there exists a unique projection point $\boldsymbol{x}_*$ of $\boldsymbol{x}$ onto $M$, i.e.,
\begin{equation}
\boldsymbol{x}_*  = P(\boldsymbol{x}).
\end{equation}
By the continuity of projection operator, there exists a positive constant $\epsilon>0$ such that
for any $\boldsymbol{x}'\in B(\boldsymbol{x},\epsilon)$, the projection $P(\boldsymbol{x}')$ is well-defined. Fig. \ref{fig3} illustrates the projection procedure.
Since $B(\boldsymbol{x},\epsilon)$ is compact, there exists a positive constant $c_0>0$ such that
\begin{equation}
\begin{aligned}
{\rm dist}(P(\boldsymbol{x}'),\boldsymbol{x}_*)=& {\rm dist}(P(\boldsymbol{x}'),P(\boldsymbol{x})) \leq c_0\| \boldsymbol{x}' -\boldsymbol{x}\|_2\leq c_0\epsilon,\\
& \boldsymbol{x}'\in B(\boldsymbol{x},\epsilon)
\end{aligned}
\end{equation}
By the above inequality, we have
\begin{equation}
P\big(B(\boldsymbol{x},\epsilon)\big)\subset B(\boldsymbol{x}_*, c_0 \epsilon)
\end{equation}

\begin{figure}
	\centering  
	\includegraphics[width=5.0cm,height=3.0cm]{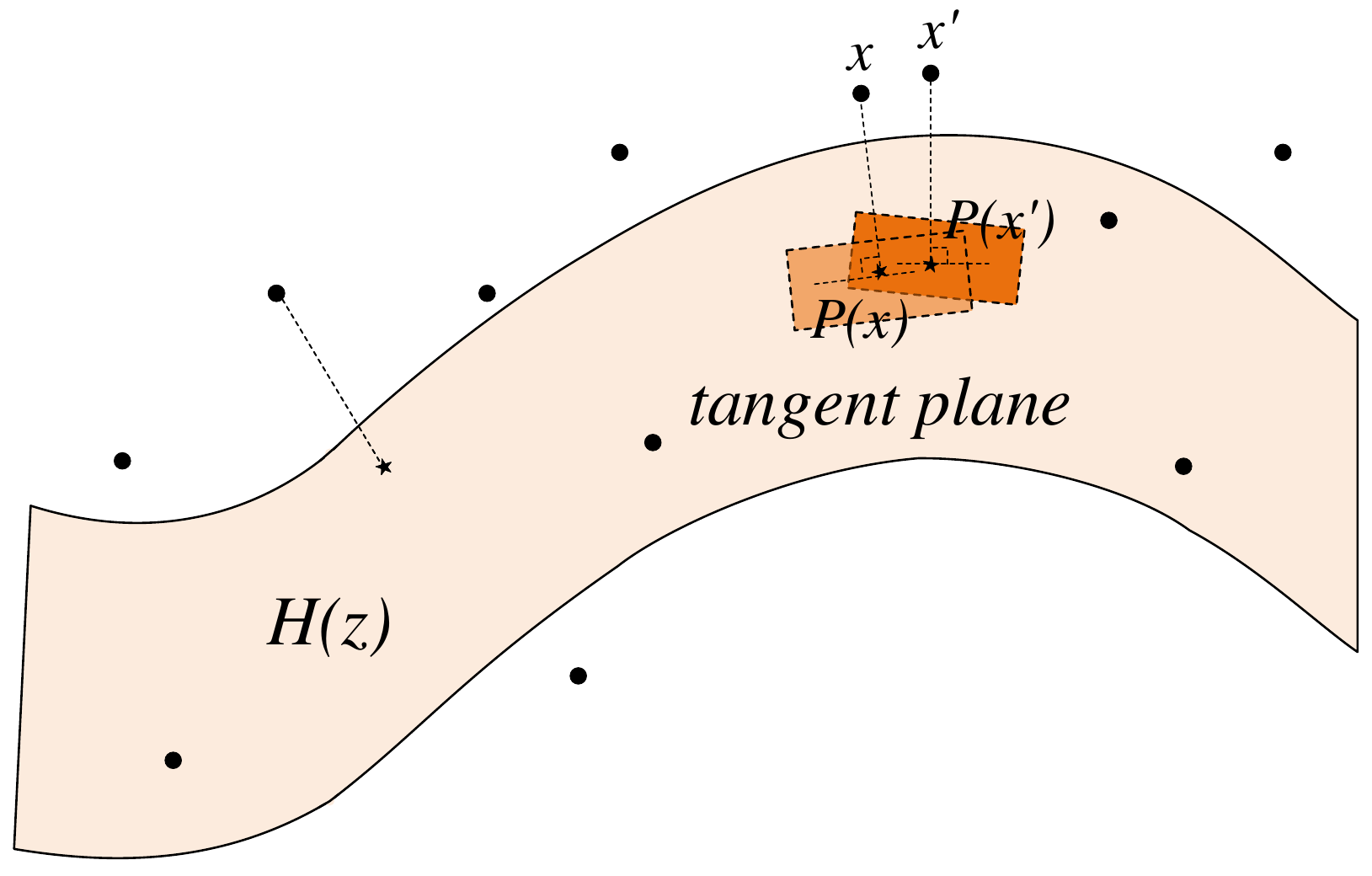}  
	\caption{The examples of $\boldsymbol{x}$ and $\boldsymbol{x}^{\prime}$ in a $D$-dimensional space are projected onto a $d$-dimensional manifold via $P$.}  
	\label{fig3}   
\end{figure}

Let $U$ be an open bounded subset of $\mathbb{R}^{d}$, $\boldsymbol{z} \in U$, and $H:U(\subset \mathbb{R}^{d}) \to M$
be a local parametrization of $M$ such that
\[
H(\boldsymbol{z}) = \boldsymbol{x}_*, \quad  B(\boldsymbol{x}_*, c_0 \epsilon) \subset H(U)
\]
Since $H$ is diffeomorphism defined from $U(\subset \mathbb{R}^{d})$ to $H(U) (\subset M)$,
the set $H^{-1} \big( B(\boldsymbol{x}_*, c_0 \epsilon) \big)$ is a compact neighborhood of $\boldsymbol{z}$.

Since $H$ is an embedded submanifold, the differential ${\rm D}H(\widetilde{\boldsymbol{z}}):\mathbb{R}^{d} \to T_{H(\widetilde{\boldsymbol{z}})}M$
is injective for any point $\widetilde{\boldsymbol{z}}\in U$.
By the compactness of $H^{-1} \big( B(\boldsymbol{x}_*, c_0 \epsilon) \big)$, there are two positive constants
$c_1 < c_2$ such that
\begin{equation}
\begin{array}{c}
c_1\|\Delta \widetilde{\boldsymbol{z}}\|_2 \leq \| {\rm D}H(\widetilde{\boldsymbol{z}})[\Delta \widetilde{\boldsymbol{z}}]\|_2 \leq c_2\|\Delta \widetilde{\boldsymbol{z}}\|_2,\\
 \forall \widetilde{\boldsymbol{z}}\in H^{-1} \big( B(\boldsymbol{x}_*, c_0 \epsilon) \big), \Delta \widetilde{\boldsymbol{z}}\in \mathbb{R}^{d}
\end{array}
\end{equation}
Given a point $\boldsymbol{x}'\in B(\boldsymbol{x},\epsilon)$, denote $\boldsymbol{x}_*' = P(\boldsymbol{x}')$.
On the one hand, if $\boldsymbol {x}_* \neq \boldsymbol{x}_*'\in B(\boldsymbol{x}_*, c_0 \epsilon)$, there exists a unique point $\boldsymbol{z}'\in $ $ H^{-1} \big(B(\boldsymbol{x}_*, c_0 \epsilon)$
such that $\boldsymbol{x}_*' = H(\boldsymbol{z}')$ \cite{[a20]}. Thus we have 
\begin{equation}
\begin{aligned}
\gamma_{1}\int_{0}^1 \|  {\rm D}H\big(\boldsymbol{z}+ t(\boldsymbol{z}'-\boldsymbol{z})\big)[\boldsymbol{z}'-\boldsymbol{z}]\|_2 {\rm d}t 
\leq  {\rm dist}(\boldsymbol{x}_*',\boldsymbol{x}_*) \\ \leq \int_{0}^1 \|  {\rm D}H\big(\boldsymbol{z} + t(\boldsymbol{z}'-\boldsymbol{z})\big)[\boldsymbol{z}'-\boldsymbol{z}]\|_2 {\rm d}t
\end{aligned}
\end{equation}
where $0<\gamma_{1}\leq \frac{{\rm dist}(\boldsymbol{x}_*',\boldsymbol{x}_*)}{\int_{0}^1 \|  {\rm D}H\big(\boldsymbol{z} + t(\boldsymbol{z}'-\boldsymbol{z})\big)[\boldsymbol{z}'-\boldsymbol{z}]\|_2 {\rm d}t} $.

By (13) and (14), we have
\begin{equation}
c_1\gamma_{1}\|\boldsymbol{z}'-\boldsymbol{z}\|_2 \leq  {\rm dist}(\boldsymbol{x}_*',\boldsymbol{x}_*) \leq c_2\|\boldsymbol{z}'-\boldsymbol{z}\|_2
\end{equation}
According to (11) and (15), we can obtain
\begin{equation}
\begin{aligned}
\|\boldsymbol{z}'-\boldsymbol{z}\|_2 \leq \frac{1}{c_1\gamma_{1}} {\rm dist}(\boldsymbol{x}_*',\boldsymbol{x}_*)  &= \frac{1}{c_1\gamma_{1}}{\rm dist}(P(\boldsymbol{x}'),P(\boldsymbol{x}))\\
&\leq  \frac{c_0}{c_1\gamma_{1}}\| \boldsymbol{x}' - \boldsymbol{x}\|_2 
\end{aligned}
\end{equation}
On the other hand, if $\boldsymbol{x_*} = \boldsymbol{x_*'}$, according to the diffeomorphism of $H$, we have $\boldsymbol{z}'= \boldsymbol{z}$.\end{proof}

In summary, the above result indicates that if we obtain a seed example, it can guarantee a valid EAE in the latent space.

\section{Experiments}
\subsection{Experimental Setup}
Our experiment is implemented with PyTorch1.4, and conducted on a server with four 2080ti GPUs running Ubuntu16.04.6 LTS.

\textbf{Datasets:} Two popular benchmark datasets, CIFAR-10 and ImageNet, are used to evaluate our EAE adversarial training algorithm. CIFAR-10 has $60,000$ color images from $10$ categories with a size of $32 \times 32$. For ImageNet, we only randomly extract examples with $100$ categories and each category includes $1,350$ images with a size of $224 \times 224$.

\textbf{Models:}  (1) To select seed examples, we first train a ResNet-18 model $F_1$ on CIFAR-10 with $15$ epochs and an AlexNet model $F_3$ on ImageNet with $20$ epochs, respectively. 

(2) To generate perturbed examples for test, we train a VGG-16 model $F_2$ on CIFAR-10 with $15$ epochs and a ResNet-18 model $F_4$ on ImageNet with $20$ epochs, respectively. 

(3) On CIFAR-10, we conduct adversarial training on ResNet-18 models and on ImageNet we conduct adversarial traing on AlexNet models with EAE, FGSM, BIM, PGD, and FAST etc. Since our experiments are conducted in full black-box settings, these models are different from $F_2$ and  $F_4$.


\begin{table}
\caption{The number and MLD of seed examples and non-seed examples with respect to different perturbation bounds on CIFAR-10.}
\vskip 0.15in
\begin{center}
\begin{small}
\begin{sc}
\scalebox{0.75}{
\begin{tabular}{cccccr}
\toprule
$\epsilon$ & \#(Seed) & MLD(Seed) & \#(Non-Seed) & MLD(Non-Seed) \\ 
\midrule
$0.01$ & $2,299$ & $3.63$ & $6,883$ & $11.85$ \\
$0.02$ & $3,949$ & $4.68$ & $5,233$ & $12.36$ \\
$0.03$ & $5,051$ & $5.26$ & $4,131$ & $12.82$ \\
$0.04$ & $5,839$ & $5.66$ & $3,343$ & $13.21$ \\
$0.05$ & $6,441$ & $5.98$ & $2,741$ & $13.50$ \\ 
\bottomrule
\end{tabular}
}
\end{sc}
\end{small}
\end{center}
\vskip -0.1in
\label{tab1}
\end{table}

\begin{table}
\centering
\caption{The number and MLD of seed examples and non-seed examples with respect to different perturbation bounds on ImageNet.}
\vskip 0.15in
\begin{center}
\begin{small}
\begin{sc}
\scalebox{0.75}{
\begin{tabular}{cccccr}
\toprule
$\epsilon$     & \# (Seed) & MLD(Seed) & \# (Non-Seed) & MLD(Non-Seed) \\ \hline
$0.002$ & $241$       & $1.86$      & $1,081$          & $9.77$          \\
$0.004$ & $447$       & $3.36$      & $875$           & $10.87$         \\
$0.006$ & $642$       & $4.47$      & $680$           & $11.97$         \\
$0.008$ & $779$       & $5.17$      & $543$           & $12.87$         \\ 
\bottomrule
\end{tabular}
}
\end{sc}
\end{small}
\end{center}
\vskip -0.1in
\label{tab2}
\end{table}

\subsection{Validation of Threshold Selection}
We select seed examples on CIFAR-10 and ImageNet. As described in Sec. 4.1, an appropriate threshold plays a critical role in determining seed examples. On CIFAR-10, we perform FGSM on the model $F_1$ with perturbation bound $\epsilon=0.01$ to construct the seed example set $D^+$ and non-seed example set $D^-$, respectively. We obtain the MLD $\bar d_1=3.63$ on $D^+$ and $\bar d_2=11.85$ on $D^-$. Obviously, $\bar d_1$ is quite different to $\bar d_2$. To further explore the impact of $\epsilon$ on MLD, we continue to set $\epsilon=0.02$, $0.03$, $0.04$, and $0.05$  and repeat the above process. MLDs with respect to different $\epsilon$ are presented in Table \ref{tab1}. It shows that although the MLD of both seed examples and non-seed examples raises with the increase of $\epsilon$, the difference between them still maintains large (about $7.54\sim8.22$), which indicates the feasibility of our threshold selection mechanism. 

On ImageNet, we similarly perform FGSM on model $F_3$ with $\epsilon= 0.002$, $0.004$, $0.006$, and $0.008$, and obtain $D^+$ and $D^-$, respectively. The results are showed in Table \ref{tab2}. Similar to Table \ref{tab1}, it also demonstrates that the MLD of seed examples is smaller than that of non-seed examples by $6.76\sim7.91$.

\begin{figure} 
\centering 
\subfigure[CIFAR-10]{
\centering
\includegraphics[width=1.1in]{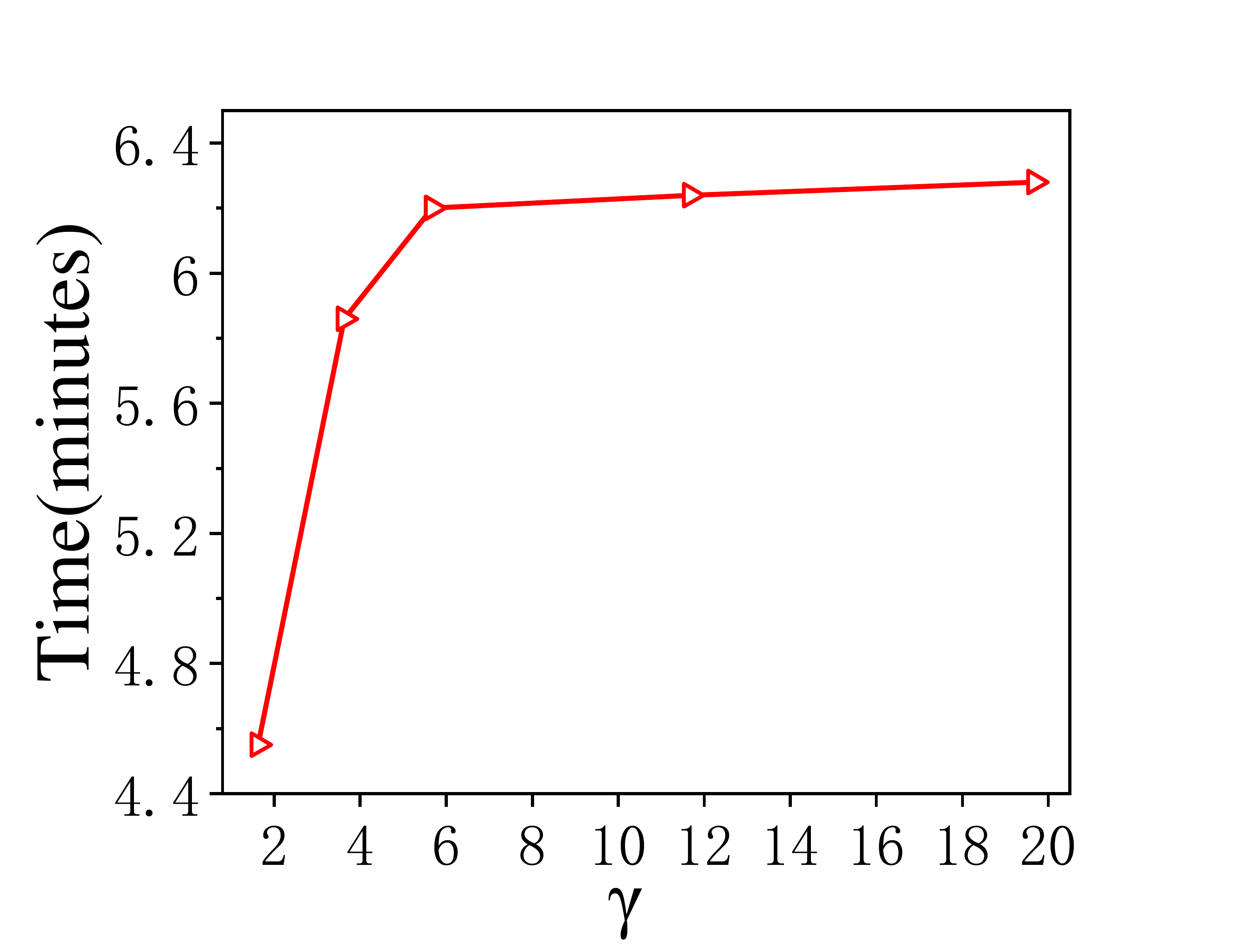}
\includegraphics[width=1.1in]{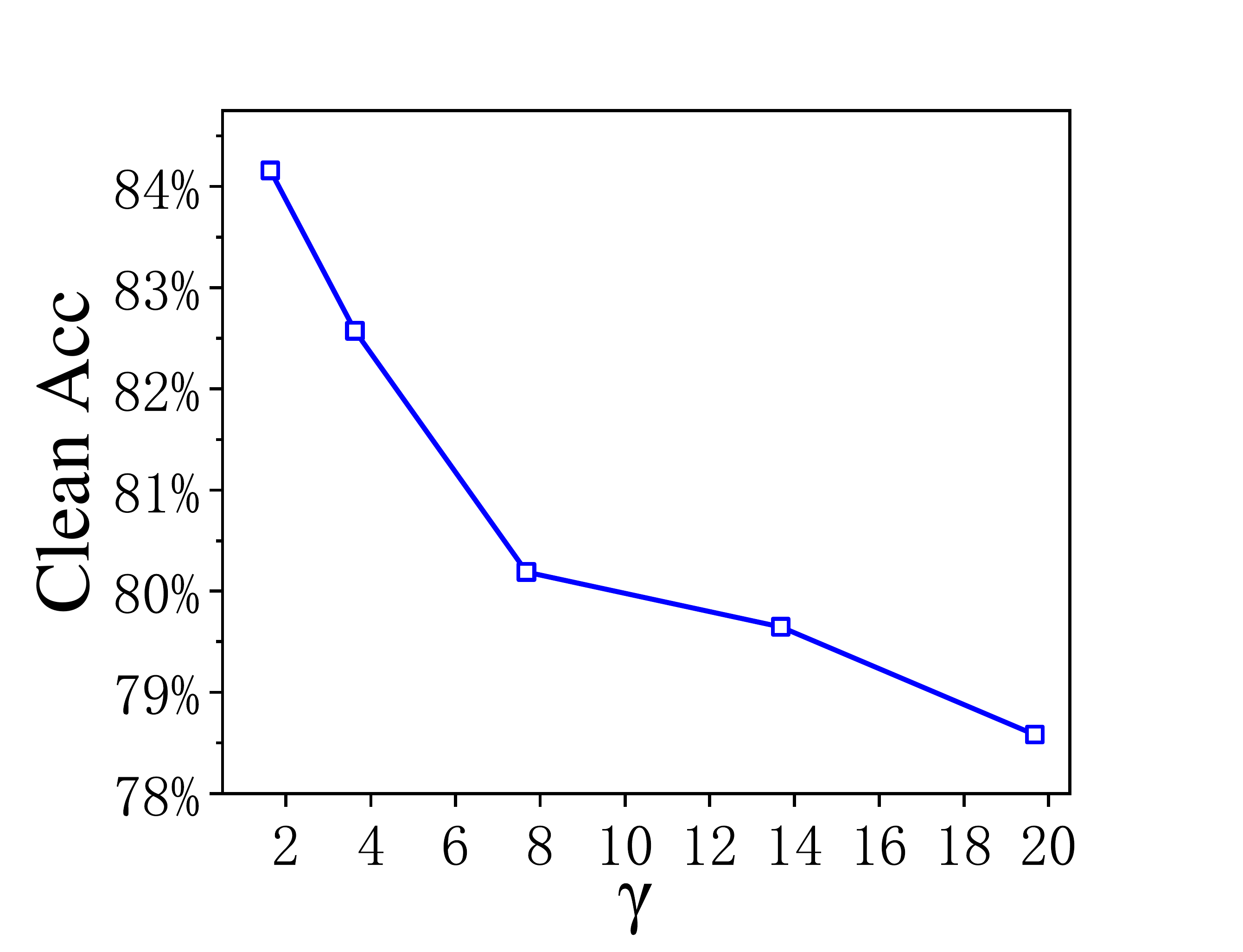}
\includegraphics[width=1.1in]{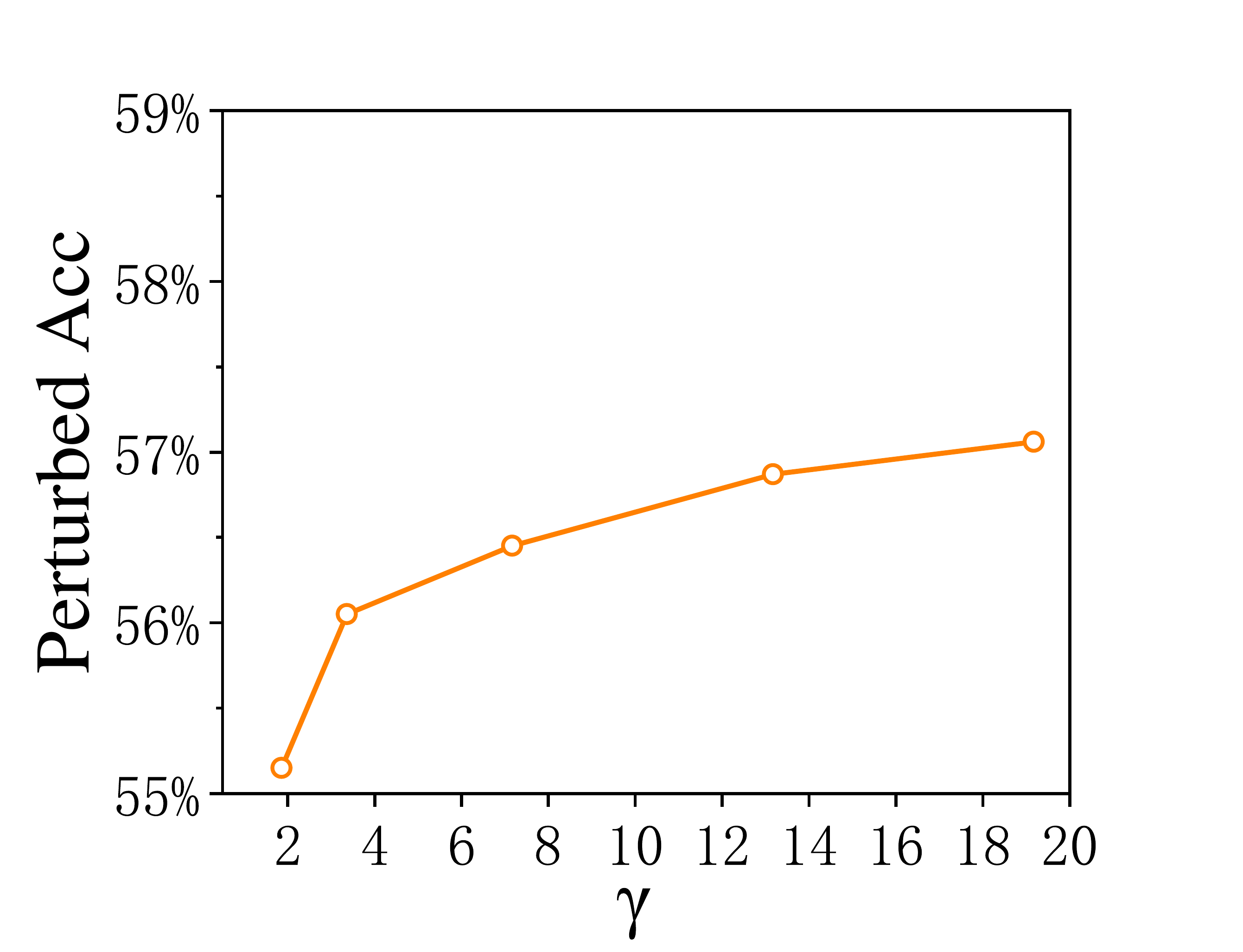} 
}
\subfigure[ImageNet]{
\includegraphics[width=1.1in]{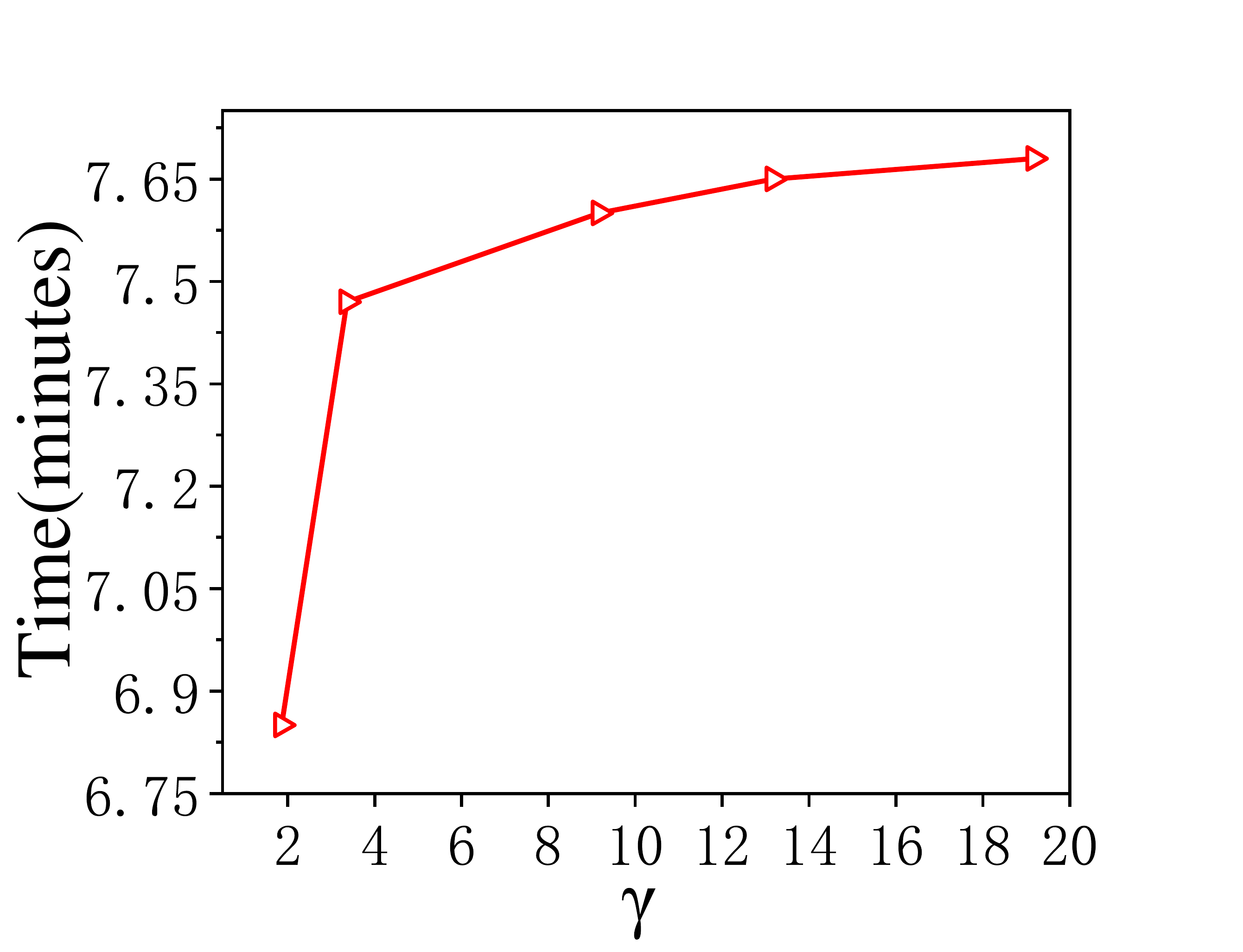}
\includegraphics[width=1.1in]{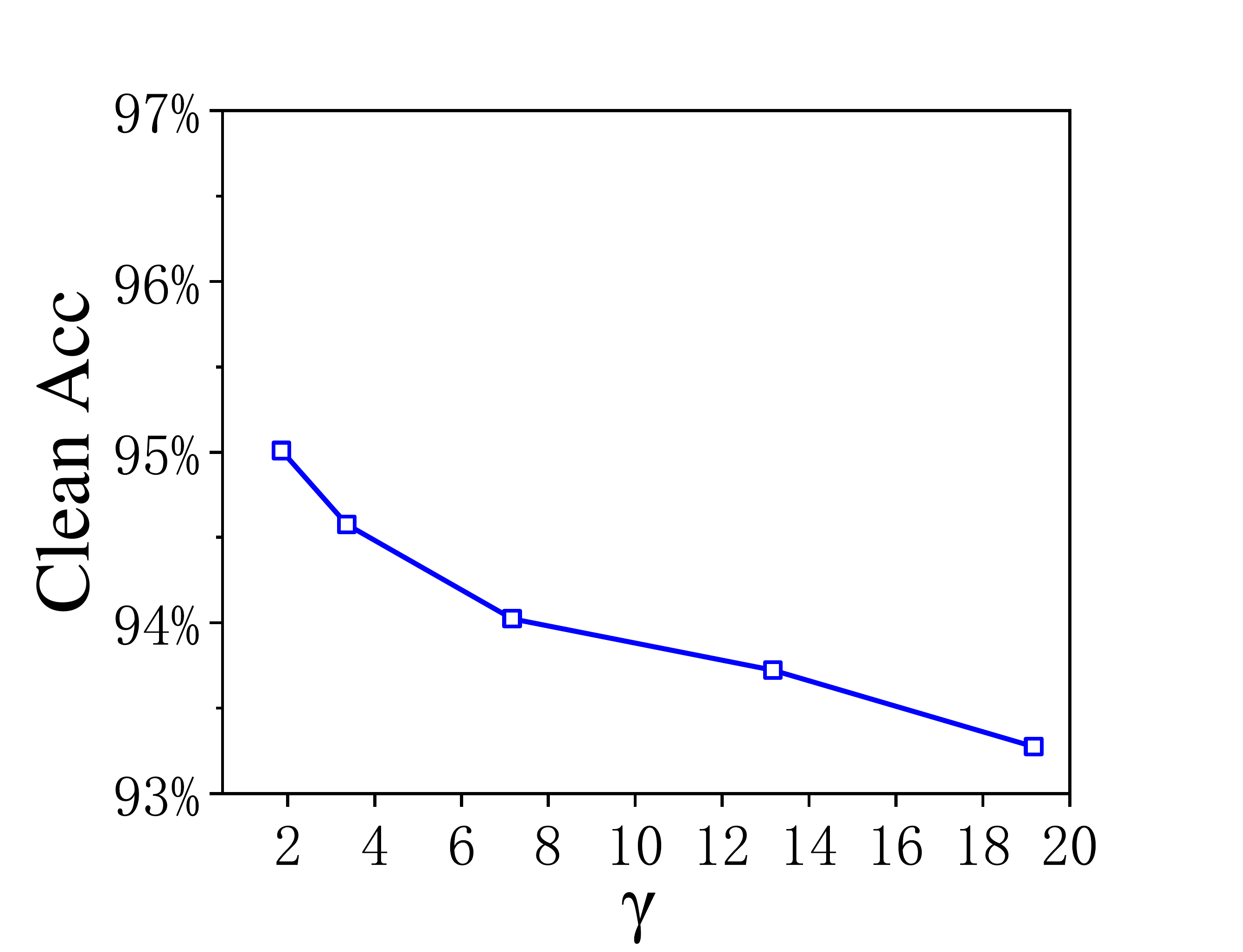}
\includegraphics[width=1.1in]{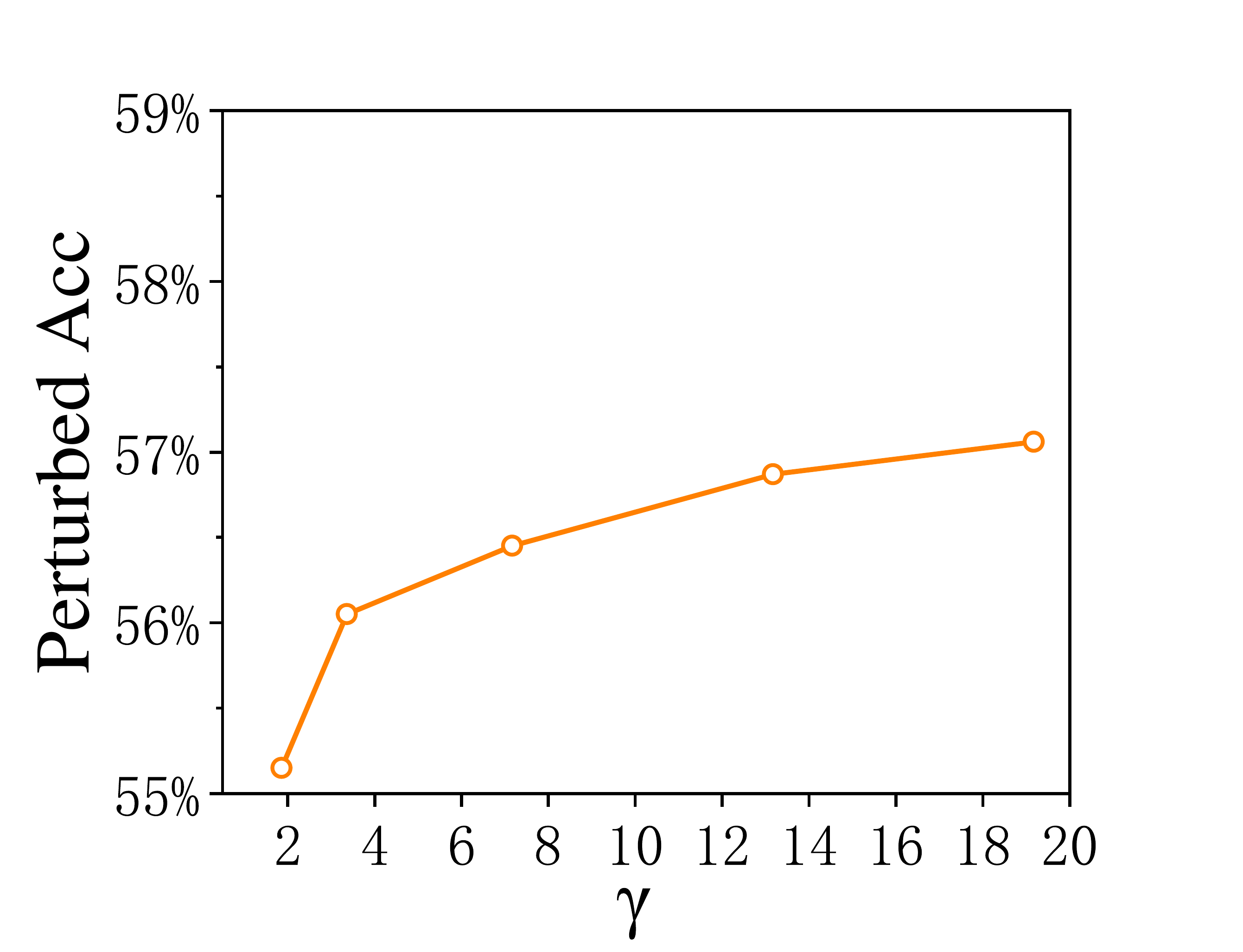} 
} 
\caption{The impact of the threshold $\gamma$ on time, clean accuracy, and perturbed accuracy of the model after EAE adversarial training: (a) the perturbed examples used for test are from CIFAR-10. They are generated by FGSM on the model $F_1$ with the perturbation bound $\epsilon=0.05$; (b) the perturbed examples used for test are from ImageNet. They are also generated by FGSM on the model $F_3$ with the perturbation bound $\epsilon=0.007$.}
\label{fig4}
\end{figure}

\subsection{Impact of Threshold}
As mentioned in Sec. 6.2, the threshold $\gamma$ is a function of the perturbation bound $\epsilon$, thus a different $\epsilon$ will lead to a different $\gamma$. Therefore, we need to determine a reasonable $\gamma$ by analyzing its influence on adversarial training with EAEs.  In this section, we demonstrate the relationship of $\gamma$ with (1) the adversarial training time, (2) the accuracy of  clean examples (\emph{Clean Acc}), and (3) the accuracy of perturbed examples (\emph{Perturbd Acc}). 

\textbf{Adversarial training time:} The first column of Fig. \ref{fig4} shows the adversarial training time with respect to the threshold, which reveals that with the raise of threshold, the adversarial training time increases accordingly. On CIFAR-10, the training time increases dramatically before $\gamma=5.98$, while after $\gamma=6$, it hardly changes. The reason is when $\gamma=6$, most of the examples have been crafted as EAEs, thus increasing the threshold will not increase the training time. Similarly, on ImageNet, when $\gamma>15$, the training time tends to be stable.

\textbf{Accuracy of clean examples:} The second column of Fig. \ref{fig4} shows the accuracy of the model after adversarial training with different thresholds on clean examples. On both CIFAR-10 and ImageNet,  the accuracy on clean examples decreases as the threshold $\gamma$ raises. Especially on CIFAR-10, the difference between the maximum and minimum prediction accuracy is about $5.9$\%, which indicates that a larger $\gamma$ will severely degrade the accuracy of clean examples than a smaller $\gamma$. 

\textbf{Accuracy of perturbed examples:} The last column of Fig. \ref{fig4} shows the accuracy of the model after adversarial training with different thresholds on perturbed examples. On CIFAR-10, the accuracy improves to some extent with the increasing threshold on perturbed examples. However, this improvement is not salient, which does not exceed $1.6$\%. On ImageNet, the accuracy of perturbed examples is only improved by $2$\% even the threshold $\gamma$ increased from $1.86$ to $19.17$, which indicates that a larger $\gamma$ does not improve the adversarial training than a smaller $\gamma$.

Based on the above analysis, we need to make a trade-off between the training time and the accuracy. A smaller threshold can speed up the adversarial training with EAEs. At the same time, it can achieve a sound accuracy on clean examples compared to a little loss of robustness. As a thumb-rule, we suggest $\gamma \in [2, 4]$ is proper for most of the data sets.

\subsection{Effectiveness of Adversarial Training}

In this section, we compare our EAE adversarial training with the following three types of adversarial training method: FGSM adversarial training, Fast adversarial training, and PGD adversarial training.

On CIFAR-10, we choose ResNet-18 as a classifier architecture. Four ResNet-18 classifiers will be obtained by four types of adversarial training with $epoch=4$, the minimum cyclic learning rate $clr_{min}=0$, and the maximum cyclic learning rate $clr_{max}=0.2$. Especially, for our EAE adversarial training, the threshold $\gamma=3.63$; for FGSM adversarial training, the perturbation bound $\epsilon=16/255$; for Fast adversarial training, $\epsilon=16/255$ and the step size $\alpha=20/255$; and for PGD adversarial training, $\epsilon=16/255$, $\alpha=8/255$, and the number of iteration $K=7$.

On ImageNet, we choose AlexNet as a classifier architecture. Four AlexNet classifiers will be obtained by four types of adversarial training with $epoch=15$, $clr_{min}=0$, and $clr_{max}=0.02$. Especially, for our EAE adversarial training, the threshold $\gamma=2.0$; for FGSM adversarial training, $\epsilon=0.01$; for Fast adversarial Training, $\epsilon=0.01$, $\alpha=3/255$; and for PGD adversarial training, $\epsilon=0.01$, $\alpha=1/255$, $K=4$. In addition, we also train two ResNet-18 classifiers with clean examples as benchmarks on CIFAR-10 and ImageNet, respectively. 


 \begin{figure} 
\centering 
\subfigure[CIFAR-10]{
\includegraphics[width=3.8cm,height=3cm]{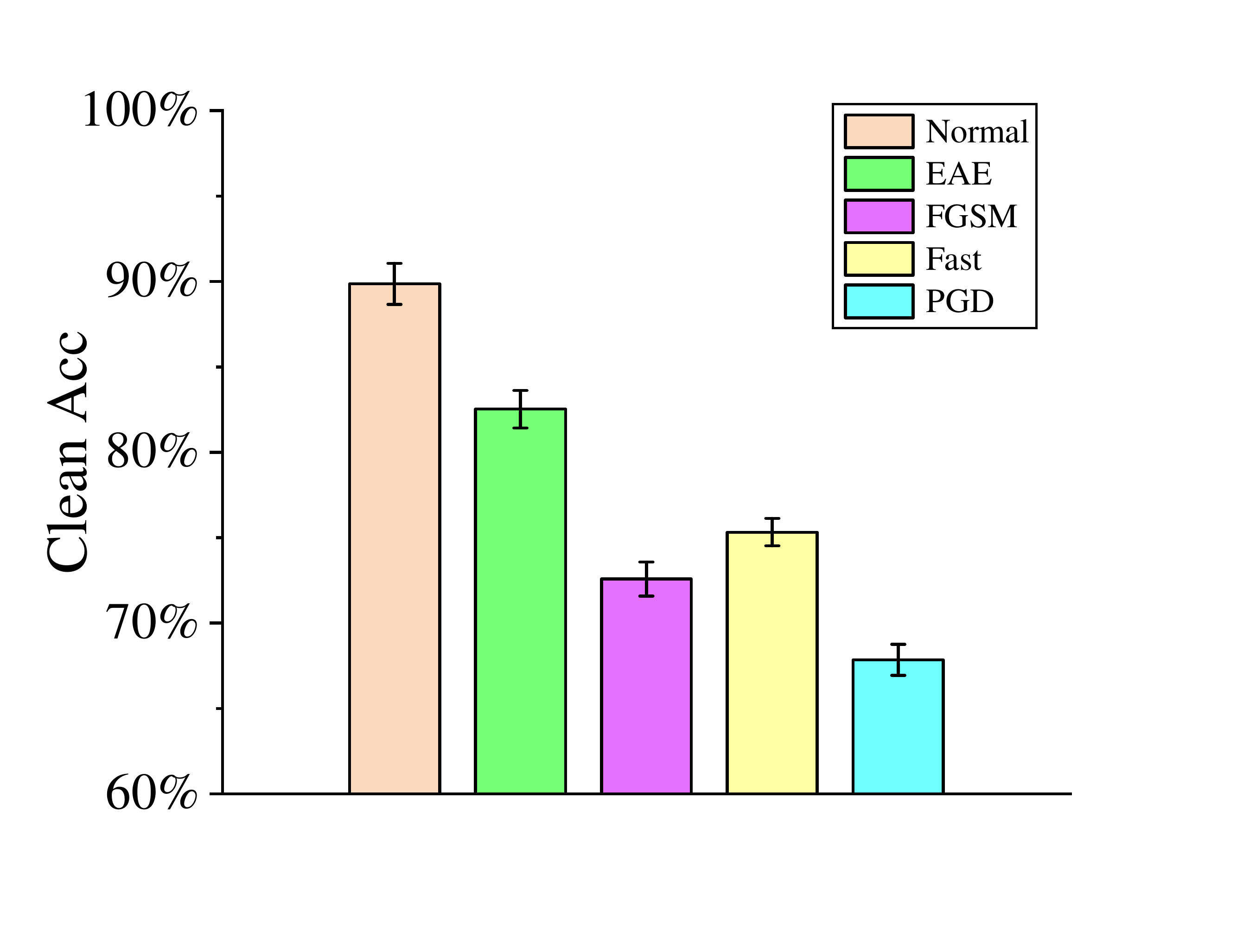} 
}
 \subfigure[ImageNet]{
\includegraphics[width=3.8cm,height=3cm]{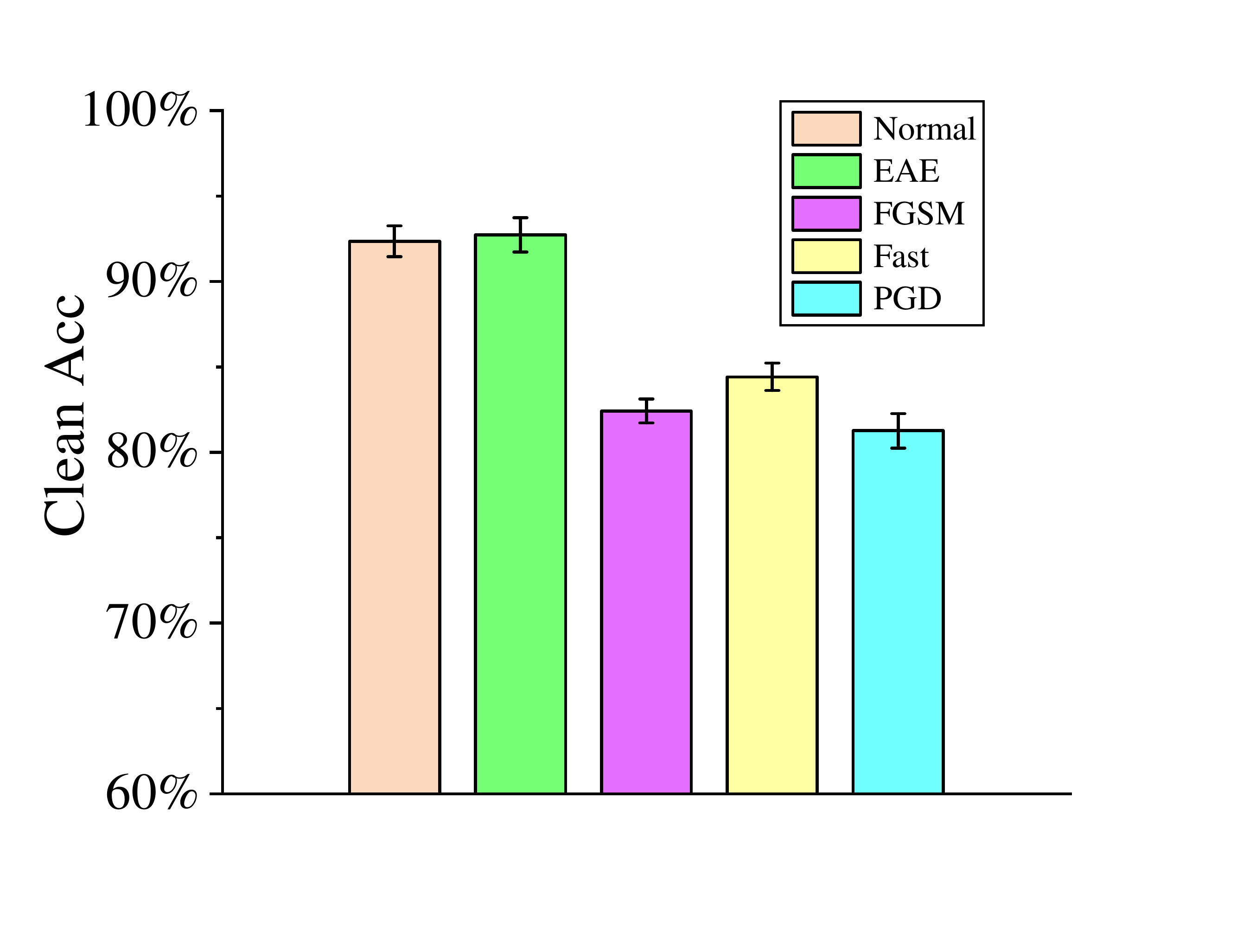} 
} 

 \caption{The accuracy of a normal trained model compared with adversarial trained models test by clean examples: (a) clean examples from CIFAR-10, all of which can be classified correctly by $F_2$; (b) clean examples from ImageNet, all of which can be classified correctly by $F_4$.}
 \label{fig5}
 \end{figure}

\textbf{The accuracy of the classifier after adversarial training tested by clean examples:} It can be seen from Fig. \ref{fig5} that compared to normal training, adversarial training in general will reduce the classification accuracy on clean examples, which has been confirmed by previous work \cite{[q]}. However, our EAE adversarial training does not decrease the accuracy of the clean examples on ImageNet, while the other three types of adversarial training decrease significantly. On CIFAR-10 the accuracy decreases as well. Nevertheless, the accuracy of our EAE adversarial training is yet higher than others by $7.21\%\sim14.68\%$.  On ImageNet, the accuracy of our EAE adversarial training is higher than others by $8.31\%\sim11.47\%$.

\begin{figure*}[htbp] 
\centering 
\subfigure[FGSM+CIFAR-10]{
\includegraphics[width=1.25in]{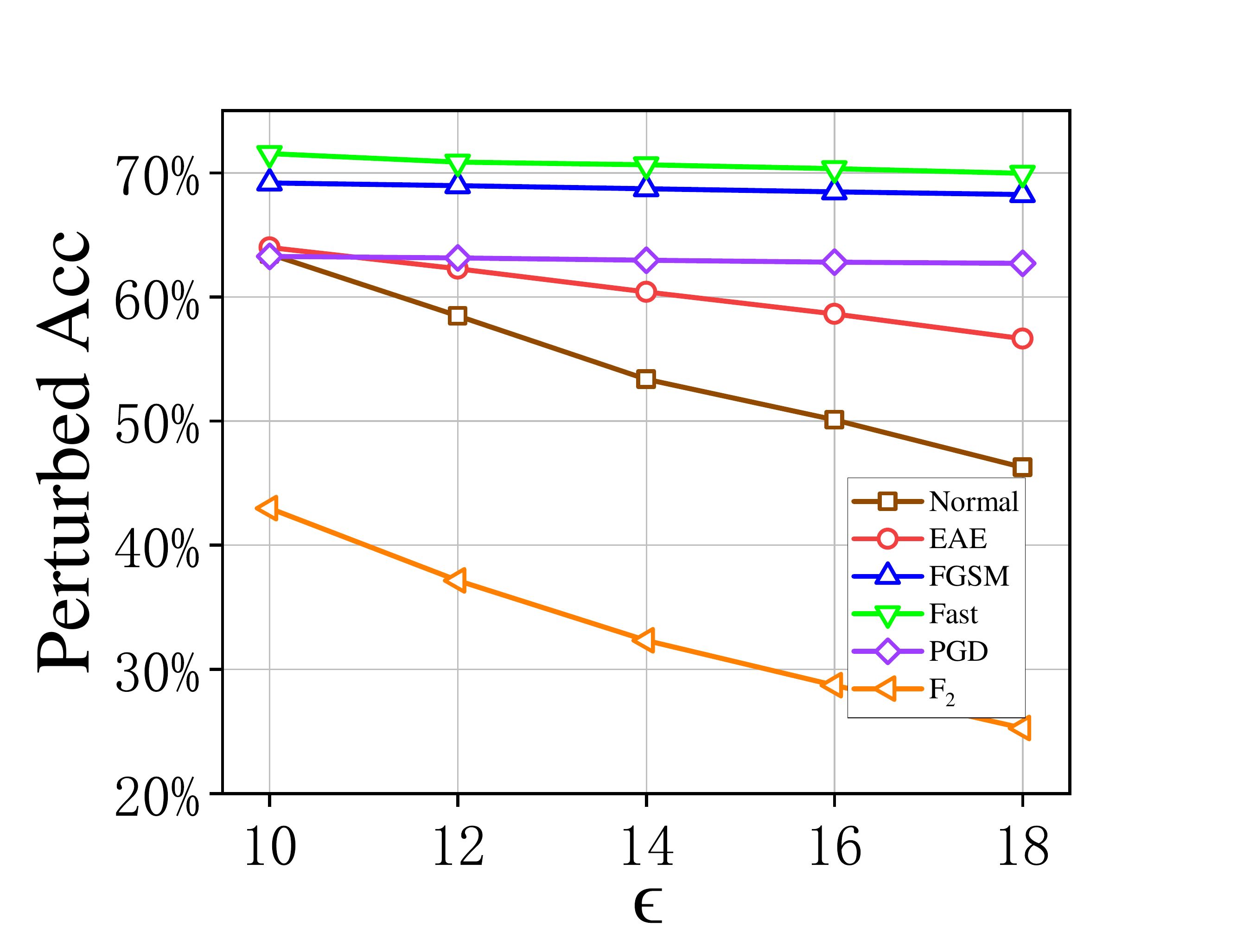} 
}
\subfigure[FGSM+ImageNet]{
\includegraphics[width=1.25in]{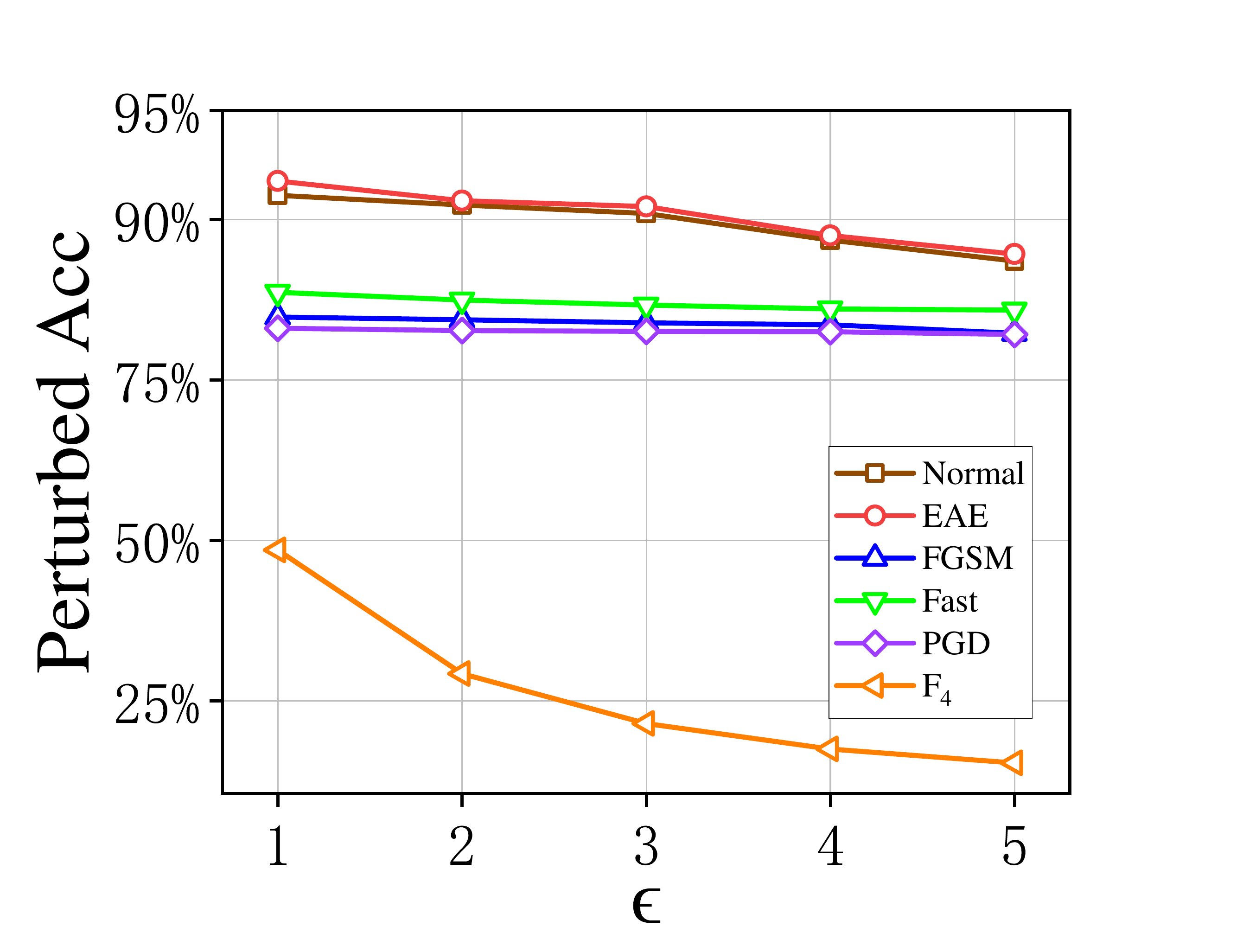} 
}
\subfigure[BIM+CIFAR-10]{
\includegraphics[width=1.25in]{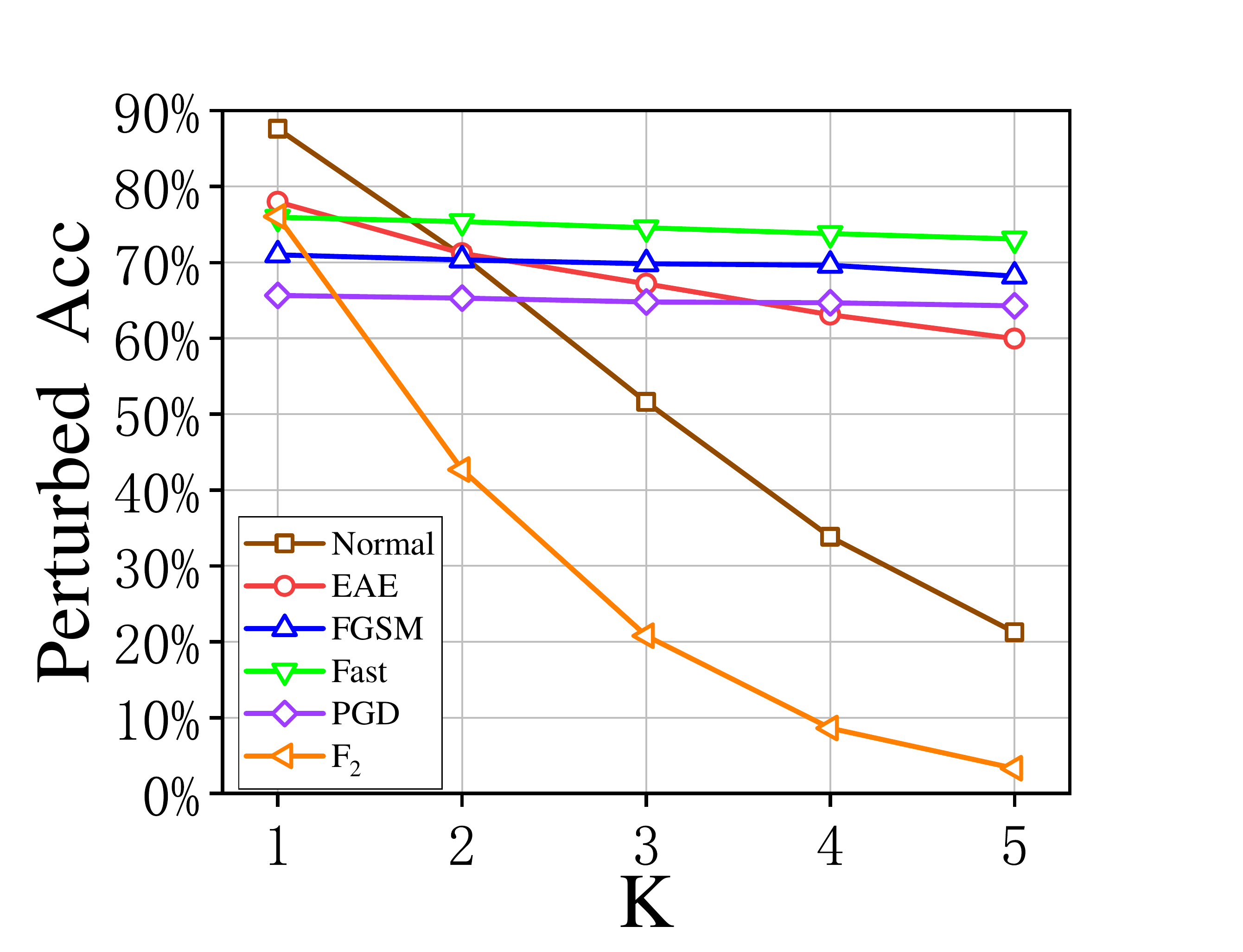} }
\subfigure[BIM+ImageNet]{
\includegraphics[width=1.25in]{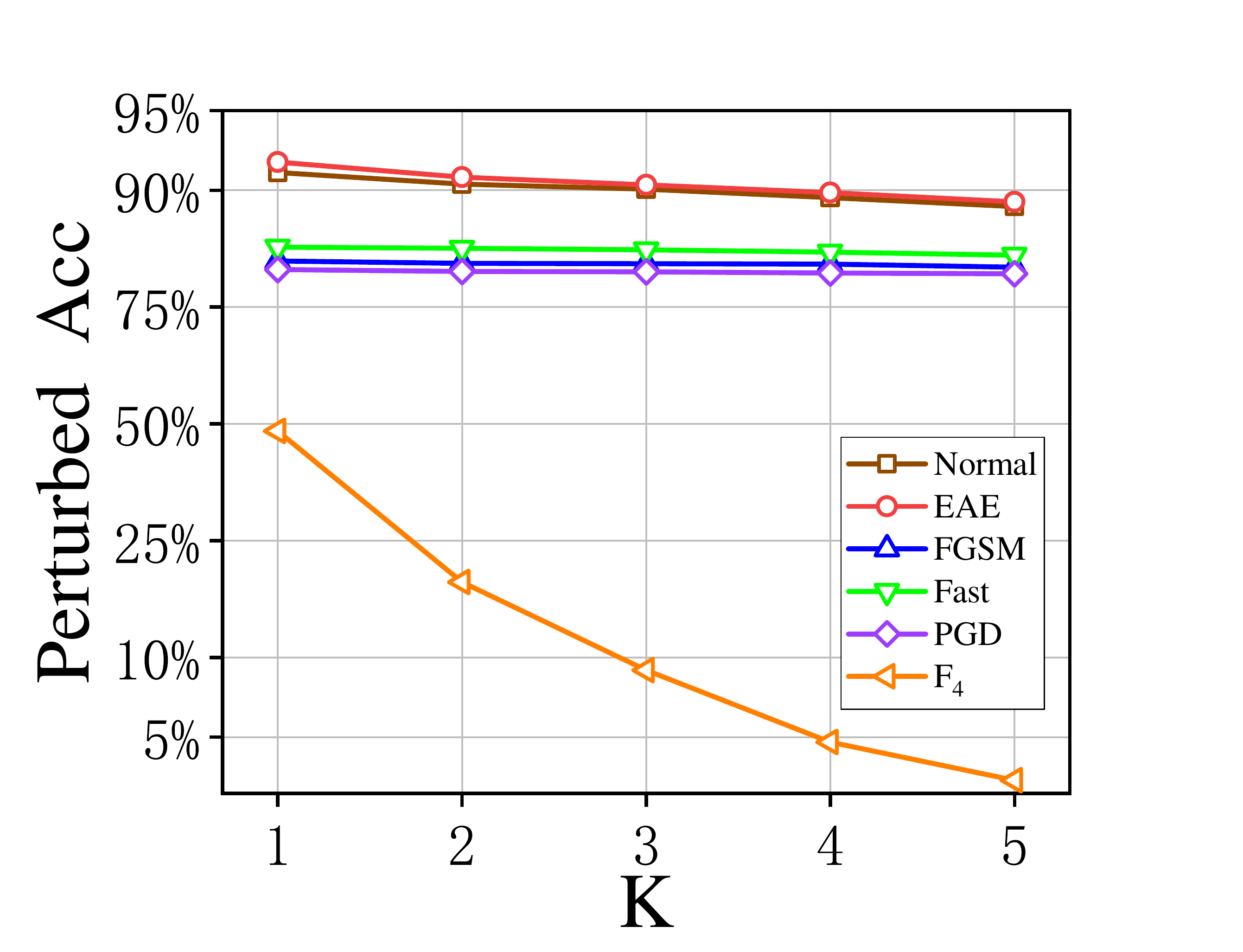} 
} 
\subfigure[PGD+CIFAR-10]{
\includegraphics[width=1.25in]{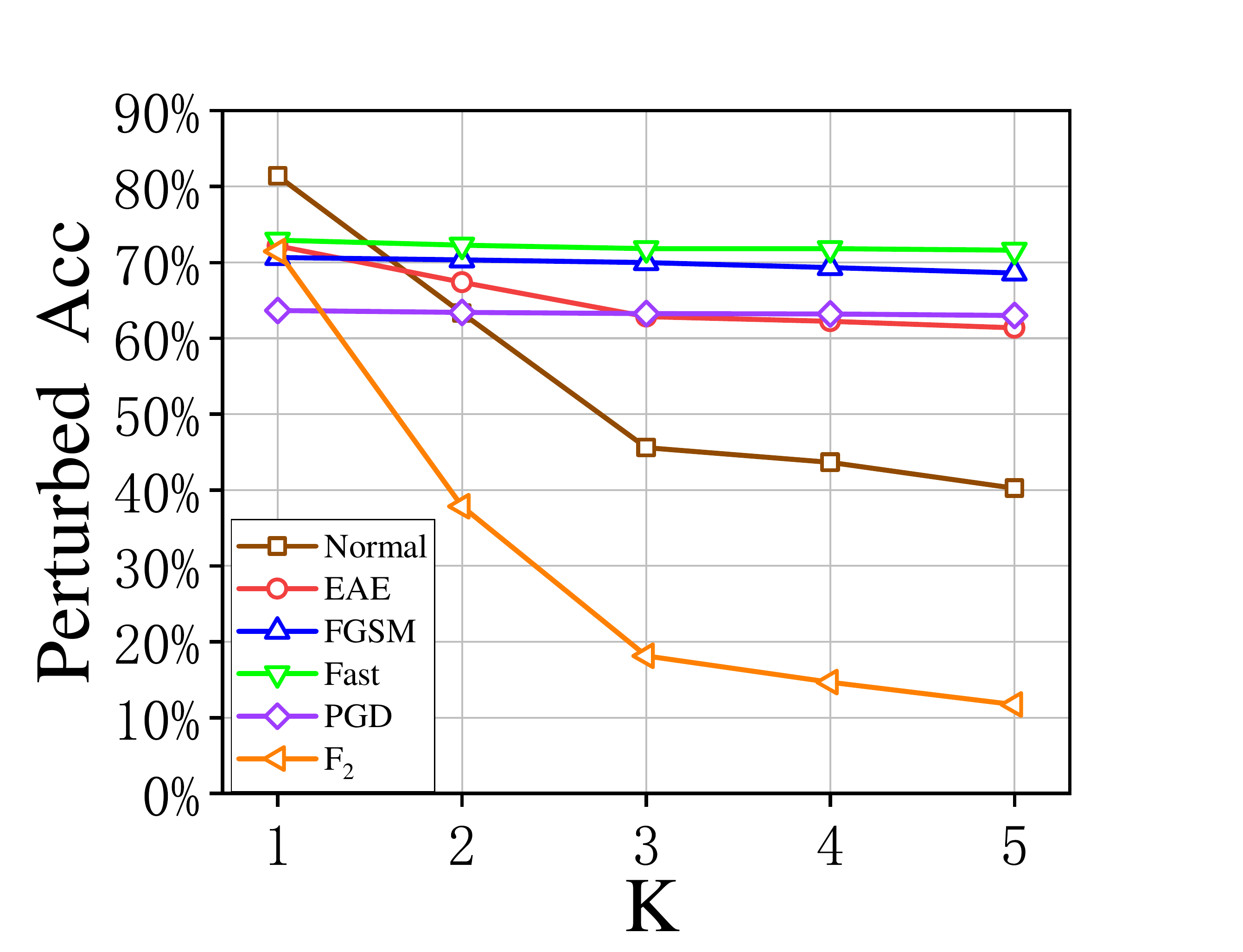} 
}
\subfigure[PGD+ImageNet]{
\includegraphics[width=1.25in]{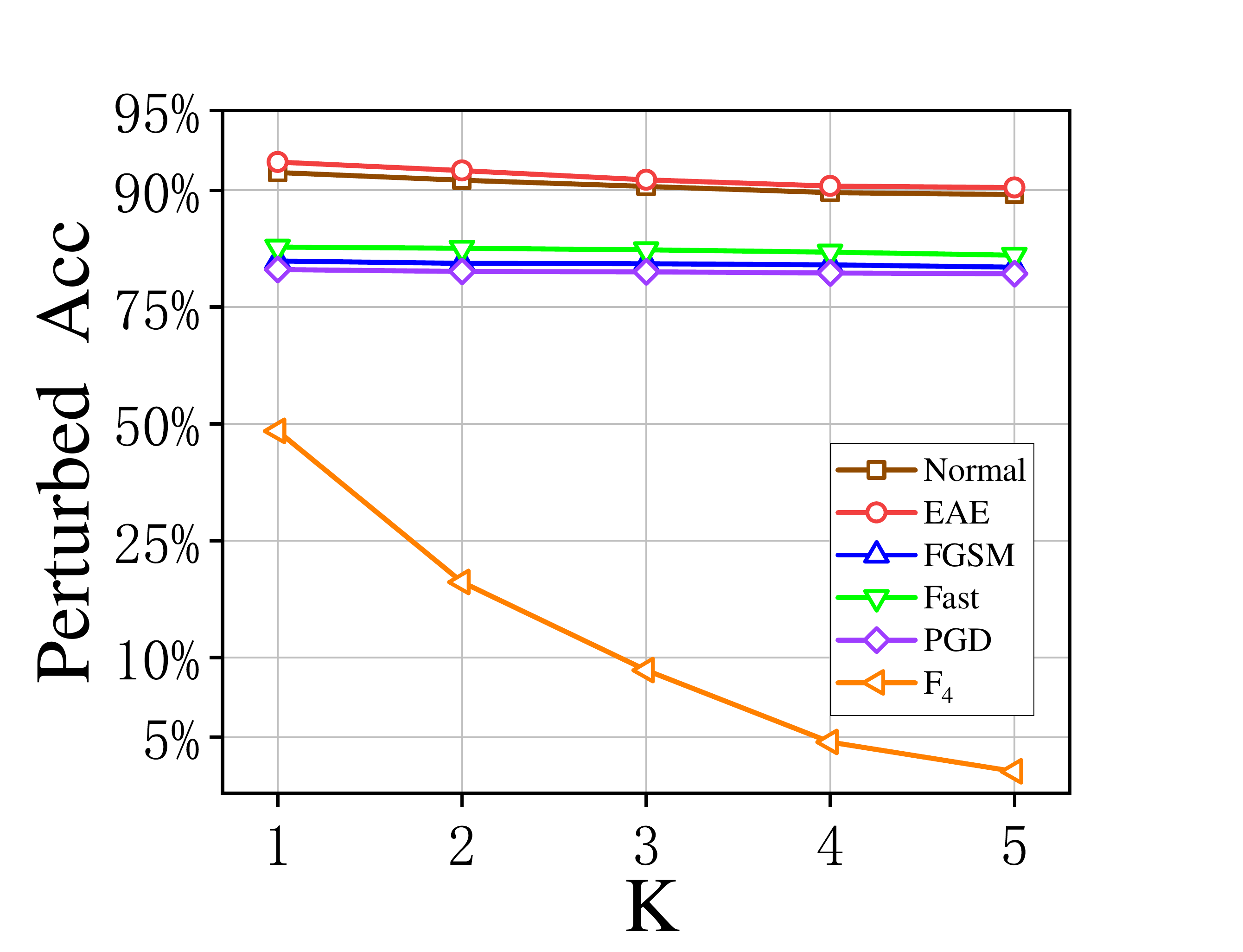} 
} 
\subfigure[RFGSM+CIFAR-10]{
\includegraphics[width=1.25in]{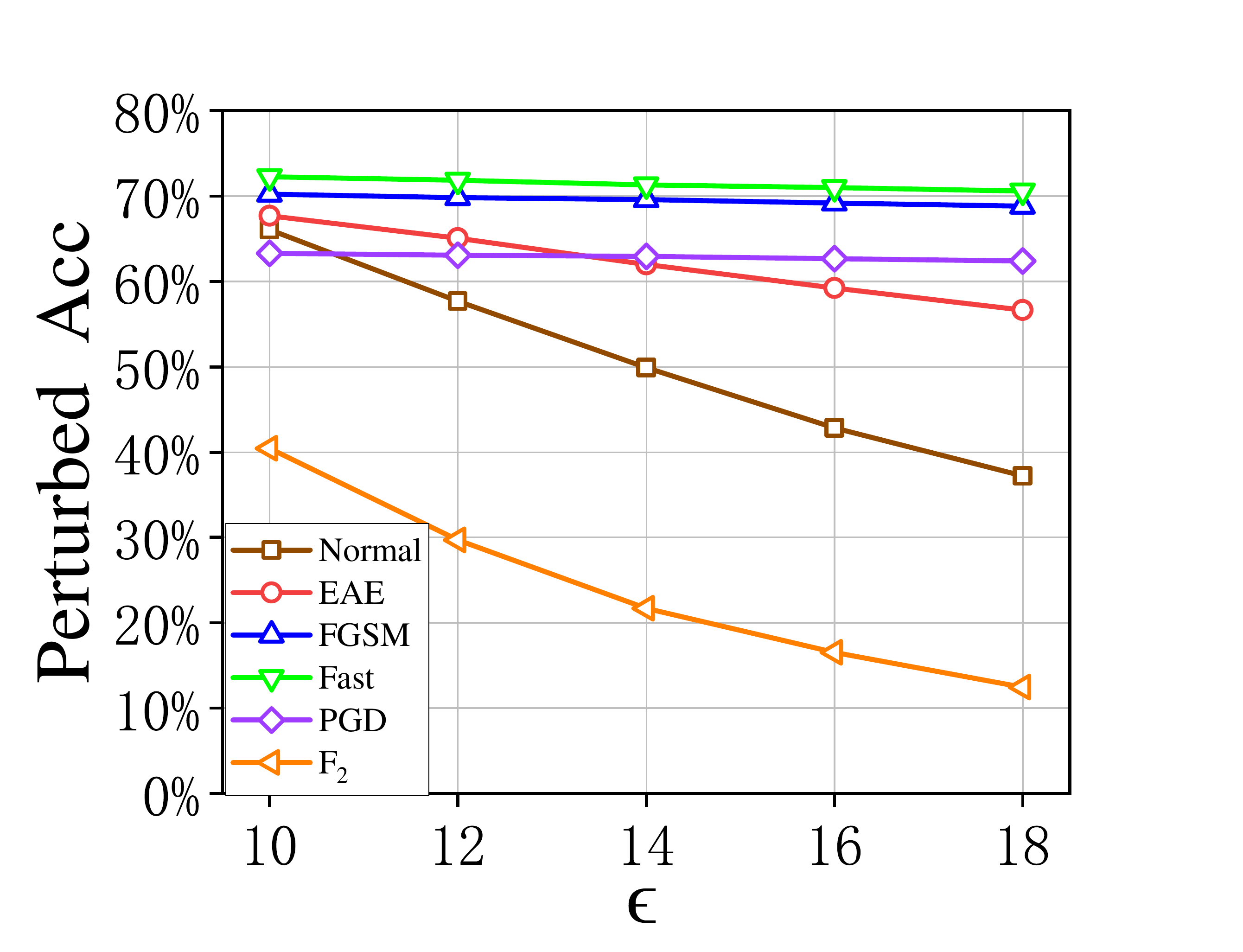} 
}
\subfigure[RFGSM+ImageNet]{
\includegraphics[width=1.25in]{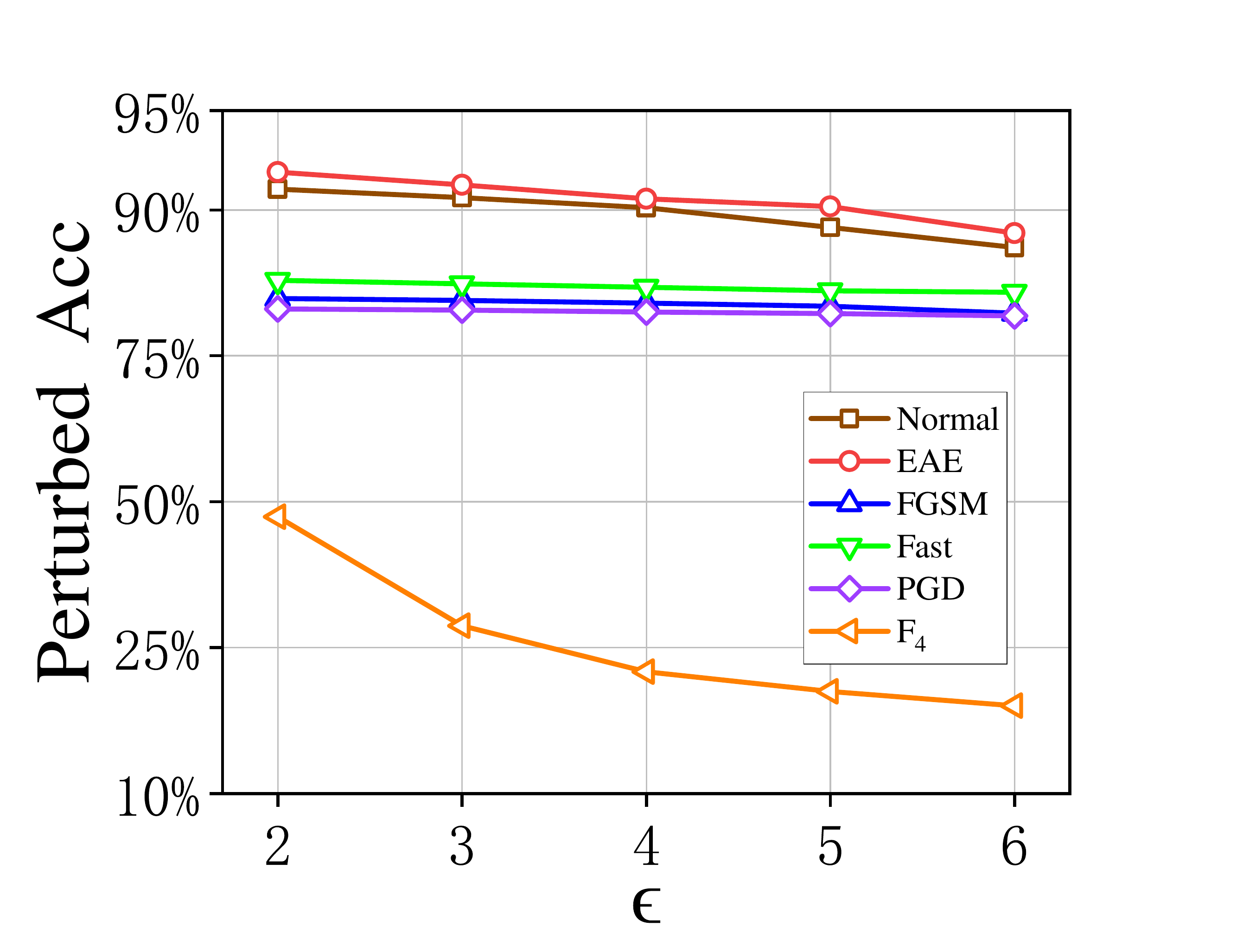} 
} 
\subfigure[Fast+CIFAR-10]{
\includegraphics[width=1.25in]{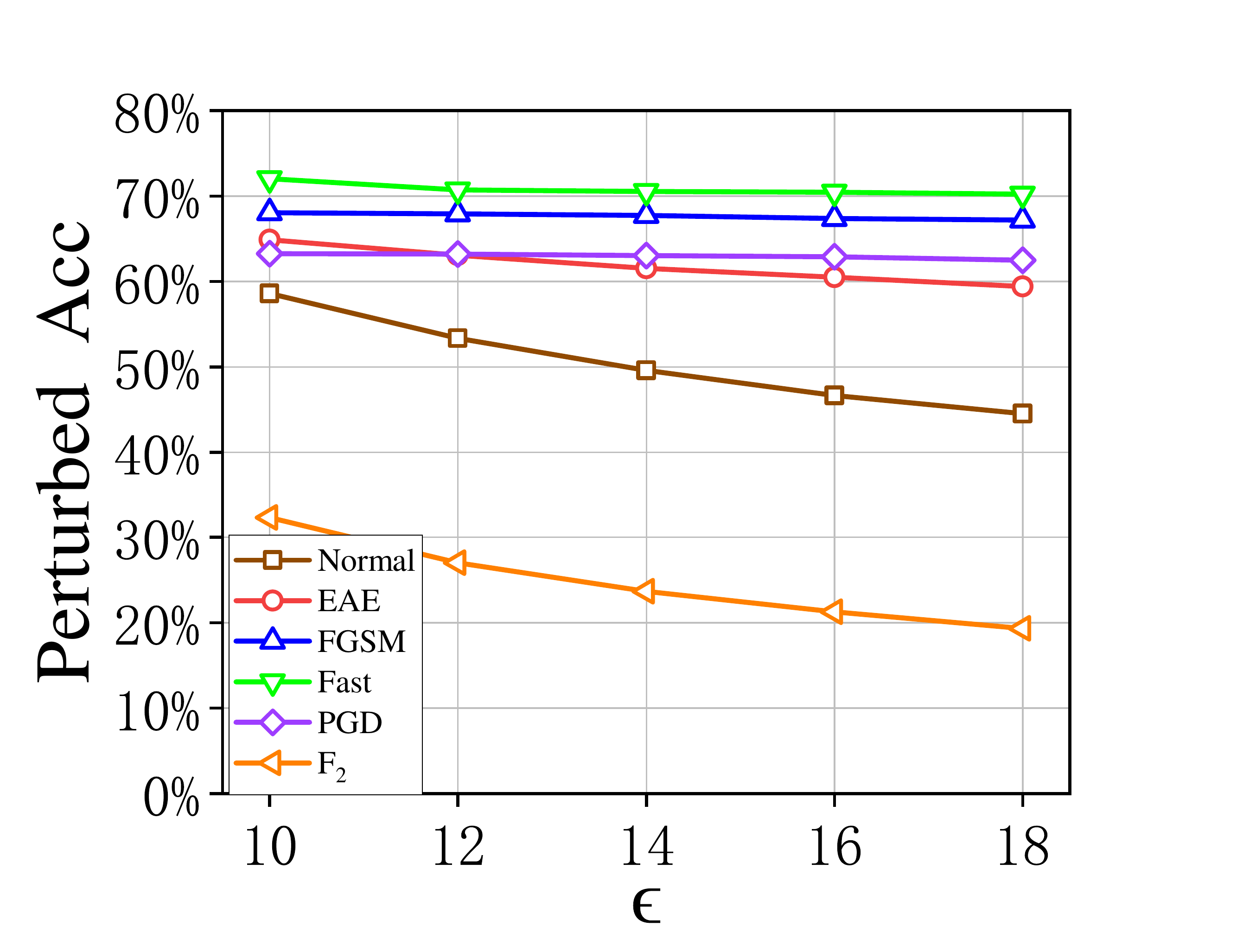} 
}
\subfigure[Fast+ImageNet]{
\includegraphics[width=1.25in]{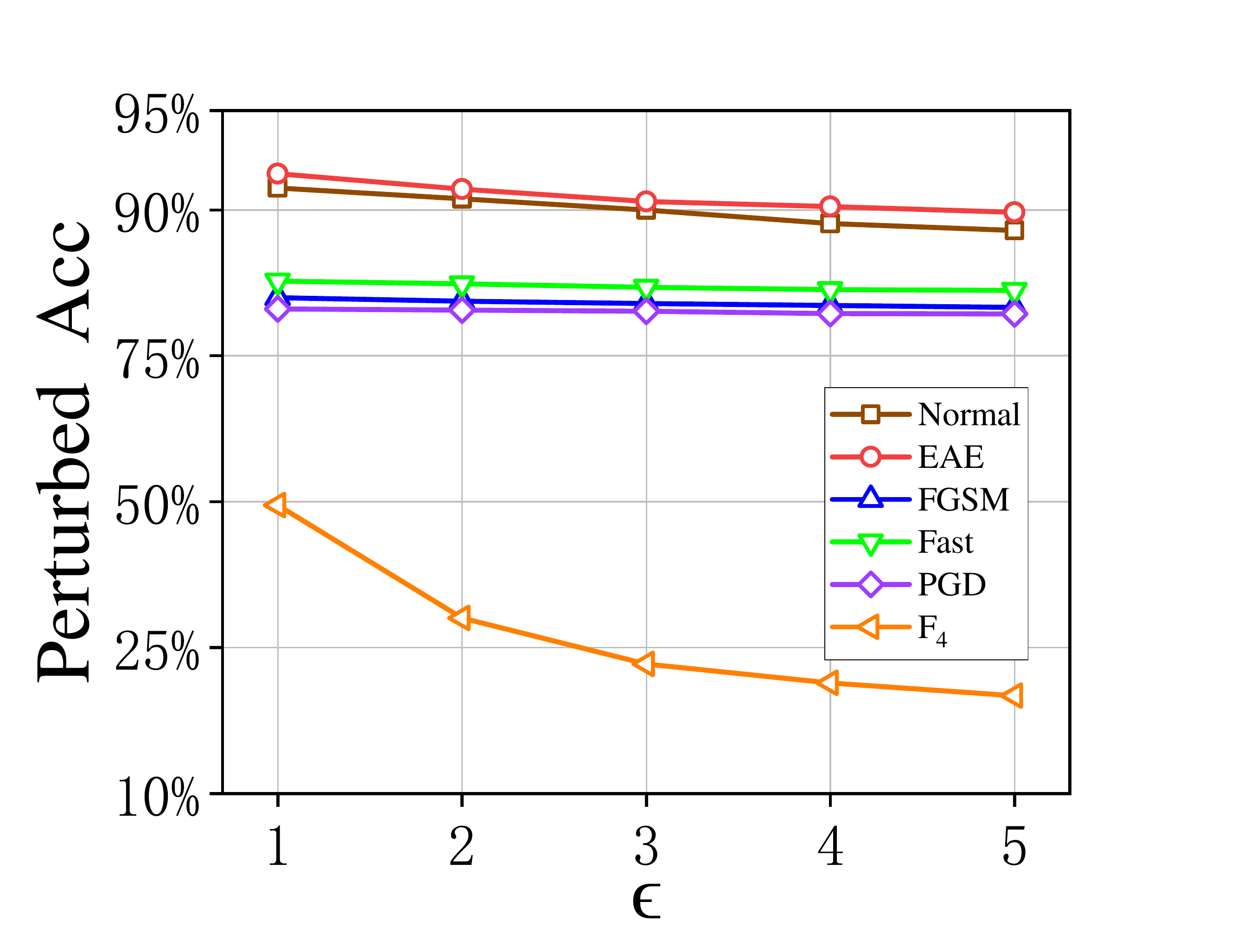} 
} 
\caption{The accuracy of the models with Normal training, EAE, FGSM, Fast, and PGD adversarial training tested by the perturbed examples. (a)$\sim$ (j) present ``attack+dataset" to generate perturbed examples for test with paremeters: (a) $\epsilon/255$; (b) $\epsilon/255$; (c) $\epsilon=12/255$, $\alpha=4/255$; (d) $\epsilon=2/255$, $\alpha/255$; (e) $\epsilon=12/255$, $\alpha=4/255$; (f) $\epsilon=4/255$, $\alpha=1/255$; (g) $\epsilon/255$, $\alpha=4/255$, $K=1$; (h) $\epsilon=1/255$, $\alpha/255$, and $K=1$; (i) $\epsilon/255$, $\alpha=20/255$; (j) $\epsilon/255$, $\alpha=6/255$. Note on CIFAR-10, the model to generate perturbed examples is $F_2$ (VGG-16), while on ImageNet, it is $F_4$ (ResNet-18).
\label{fig7}
}
\end{figure*}

Why does our EAE adversarial training not reduce the accuracy of clean examples? Since our method generating EAEs is to move the example to the decision boundary, and they will still be correctly classified with a probability of $50\%$. Thus, with our method the parameters are fine-tuned and the decision boundary merely changes less than other methods, as shown in Fig. \ref{fig14}. Hence, it has tiny influence on the accuracy of the clean examples. Remind other adversarial training methods, e.g., PGD,  the adversarial examples that they employ may cross and go far from the decision boundary, which leads the adversarial training to adjust the decision boundary with a larger magnitude. As a result, some clean examples are misclassified by their newly adjusted decision boundary.

\textbf{The accuracy of the classifier after adversarial training tested by perturbed examples:} Fig. \ref{fig7} presents the accuracy of the models with normal training or adversarial training with respect to the perturbation bound $\epsilon$. Remind that $F_2$ and $F_4$ are used to generate perturbed examples respectively on CIFAR-10 and ImageNet  to evaluate the robustness of the classifier, respectively. We observed that the proportion of adversarial examples increases with the perturbation bound $\epsilon$ or the number of iterations, which is consistent with our intuition. In essence, the accuracy of $F_2$ or $F_4$ represents the proportion of adversarial examples in the perturbed examples, and the lower accuracy of $F_2$ or $F_4$ indicates the more adversarial examples contained. 

We observed that on CIFAR-10, increasing $\epsilon$ makes the accuracy decrease by $2\% \sim 15\%$ for our EAE adversarial training compared to the other three adversarial training methods that are not obviously affected by $\epsilon$. However, compared with the normal training, our EAE adversarial training can improve the accuracy by $11\%\sim39\%$, which indicates it can enhance the robustness of the model. 

On ImageNet, the robustness of the model after EAE adversarial training is better than other adversarial training methods, and its accuracy is higher than other methods by about $7\%\sim10\%$. We also found that the robustness of the model after FGSM, Fast, and PGD adversarial training is even worse than that of normal training. Especially, PGD adversarial training is the worst among them. 

Why can our EAE adversarial training effectively improve the robustness of the model? As shown in Fig. \ref{fig14}, given a perturbation bound, if a clean example intends to be crafted as an adversarial example, our method moves the clean example towards the closest decision boundary. In fact, we merely change the maximum and the second largest component in the logits during the adversarial training, and then update the parameters through back-propagation. The decision boundary adjusted by our method can defend against a large number of adversarial examples whose corresponding clean examples belong to Class1 can easily be crafted into adversarial examples misclassified as Class2.

\begin{table}[htbp]
\caption{The training time required for four types of adversarial trainings}
\begin{center}
\begin{small}
\begin{sc}
\scalebox{0.85}{
\begin{tabular}{lccccr}
\toprule
Dataset  & EAE   & FGSM  & Fast  & PGD    \\ 
\midrule
CIFAR-10   & $2.86$ min & $7.63$ min & $7.73$ min & $17.37$ min\\
ImageNet & $4.25$ hrs & $12.54$ hrs & $15.59$ hrs & $53.45$ hrs\\ 
\bottomrule
\end{tabular}
}
\end{sc}
\end{small}
\end{center}
\label{tab3}
\end{table}

\subsection{Time Cost of Adversarial Training}
In Table \ref{tab3}, we can clearly see that our EAE adversarial training is faster than other methods, since it does not need to generate real adversarial examples, and hence does not need to calculate the gradient of the loss with respect to the example. Compared to state-of-the-art ``Fast" adversarial training on CIFAR-10 and ImageNet, our method is faster by $4.87$ minutes and $11.34$ hrs, i.e., reducing the time by $63\%$ and $72.74\%$, respectively.

\section{Conclusion}
In this paper, we propose a novel EAE adversarial training method without generating real adversarial examples to participate in training. The experiment results show that our method can speed up the adversarial training process, which outperforms state-of-the-art ``Fast'' method. Especially,our EAE adversarial training has little impact on the accuracy of clean examples, which is a challenge in previous methods. In the future, we will continue to explore other techniques, e.g., network pruning, to speed up adversarial training for a robust network. Besides this, it is likely to further improve the EAE-based adversarial training itself.




\nocite{[a21]}

\bibliography{example_paper}
\bibliographystyle{icml2021}





\end{document}